\theoremstyle{definition}
\newtheorem{theorem}{Theorem}[section]
\newtheorem*{remark}{Remark}
\newcommand{\Hquad}{\hspace{0.5em}} 
\newcommand{\red}[1]{{\color{red}{#1}}}
\renewcommand{\algorithmiccomment}[1]{\bgroup\hfill//~#1\egroup}
\newcommand{\defeq}{\vcentcolon=}
\definecolor{dkgreen}{rgb}{0,0.6,0}
\definecolor{gray}{rgb}{0.5,0.5,0.5}
\definecolor{mauve}{rgb}{0.58,0,0.82}
\newcommand{\etal}{\textit{et.al.}}
\newcommand{\eg}{\textit{e.g.}}
\newcommand{\etc}{\textit{etc.}}
\newcommand{\wrt}{\textit{w.r.t.}}
\title{\Large \textbf{APRIL: Finding the Achilles' Heel on Privacy for Vision Transformers}}
\date{}
\author{Jiahao Lu$^{1,2}$, Xi Sheryl Zhang$^{1}$,  Tianli Zhao$^{1,2}$, Xiangyu He$^{1,2}$, Jian Cheng$^1$\\
	$^1$\ Institute of Automation, Chinese Academy of Sciences \\
	$^2$\ School of Artificial Intelligence, University of Chinese Academy of Sciences
	\\
	\texttt{\{lujiahao2019, xi.zhang, zhaotianli2019\}@ia.ac.cn},\\ \texttt{\{xiangyu.he, jcheng\}@nlpr.ia.ac.cn}
}
\begin{document}
	
	\maketitle
	
	\begin{abstract}
		Federated learning frameworks typically require collaborators to share their local gradient updates of a common model instead of sharing training data to preserve privacy. However, prior works on Gradient Leakage Attacks showed that private training data can be revealed from gradients. So far almost all relevant works base their attacks on fully-connected or convolutional neural networks. Given the recent overwhelmingly rising trend of adapting Transformers to solve multifarious vision tasks, it is highly valuable to investigate the privacy risk of vision transformers. In this paper, we analyse the gradient leakage risk of self-attention based mechanism in both theoretical and practical manners. Particularly, we propose \textbf{APRIL} - \textbf{A}ttention \textbf{PRI}vacy \textbf{L}eakage, which poses a strong threat to self-attention inspired models such as ViT. Showing how vision Transformers are at the risk of privacy leakage via gradients, we urge the significance of designing privacy-safer Transformer models and defending schemes.
	\end{abstract}
	
	\section{Introduction}
	\label{sec:intro}
	
	Federated or collaborative learning~\cite{17federated} have been gaining massive attention from both academia~\cite{li2014scaling,konevcny2016federated} and industry~\cite{The_TensorFlow_Federated_Authors_TensorFlow_Federated_2018,fed19}. For the purpose of privacy-preserving, typical federated learning keeps local training data private and trains a global model by sharing its gradients collaboratively. By avoiding to transmit the raw data directly to a central server, the learning paradigm is widely believed to offer sufficient privacy. Thereby, it has been employed in real-world applications, especially when user privacy is highly sensitive, \eg hospital data~\cite{brisimi2018federated, jochems2016distributed}.

	Whilst this setting prevents direct privacy leakage by keeping training data invisible to collaborators, a recent line of the works \cite{dlg, idlg, ig, gradinversion, r-gap, tag} demonstrates that it is possible to (partially) recover private training data from the model gradients. This attack dubbed \emph{gradient leakage} or \emph {gradient inversion} poses a severe threat to the federated learning systems. The previous works primarily focus on inverting gradients from fully connected networks (FCNs) or convolutional neural networks (CNNs). Particularly, Yin \etal \cite{gradinversion} recover images with high fidelity relying on gradient matching with BatchNorm layer statistics; Zhu \etal \cite{r-gap} theoretically analyse the risk of certain architectures to enable the full recovery. One intriguing question of our interest is that, \textit{does gradient privacy leakage occur in the context of architectures other than FCNs and CNNs?} 
	
	The recent years have witnessed a surge of methods about Transformer~\cite{transformer}. As an inherently different architecture, Transformer can build large scale contextual representation models, and achieve impressive results in a broad set of natural language tasks. For instance, huge pre-trained language models including BERT~\cite{devlin-etal-2019-bert}, XLNet~\cite{yang2019xlnet}, GPT-3~\cite{brown2020language}, Megatron-LM~\cite{shoeybi2019megatron}, and so forth are established on the basis of Transformers. 
	Inspired by the success, original works~\cite{Wang_2018_CVPR, ramachandran2019stand,cordonnier2019relationship,bello2019attention} seek to the feasibility of leveraging self-attention mechanism with convolutional layers to vision tasks.
	Then, DETR~\cite{detr} makes pioneering progress to use Transformer in object detection and ViT~\cite{vit} resoundingly succeeds in image classification with a pure Transformer architecture. Coming after ViT, dozens of works manage to integrate Transformer into various computer vision tasks ~\cite{swin, li2021mst, yanlearning, liu2021paint, yang2021transpose, yu2021pointr, xie2021segformer, fang2021you}. Notably, vision Transformers are known to be extremely data-hungry~\cite{vit}, which makes the large-scale learning in the federated fashion more favorable.

	Despite the rapid progress aforementioned, there is a high chance that vision Transformers suffer the gradient leakage risk. Nevertheless, the line of the study on this privacy issue is absent. Although the prior work~\cite{tag} provides an attack algorithm to recover private training data for a Transformer-based language model via an optimization process, the inherent reason of Transformer's vulnerability is unclear. Different with leakage on Transformer in natural language tasks~\cite{tag}, we claim that vision Transformers with the position embedding not only encodes positional information for patches but also enables gradient inversion from the layer. In this paper, we introduce a novel analytic gradient leakage to reveal why vision Transformers are easy to be attacked. Furthermore, we explore gradient leakage by recovery mechanisms based on an optimization approach and provide a new insight about the position embedding. Our results of gradient attack will shed light on future designs for privacy-preserving vision Transformers.           

	To summarize, our contributions are as follows:
	\begin{itemize}
		\item{We prove that for the classic self-attention module, the input data can be perfectly reconstructed without solving an intractable optimization problem, if the gradient \wrt the input is known.}
		
		\item{We demonstrate that jointly using self-attention and learnable position embedding place the model at severe privacy risk. The attacker obtain a closed-form solution to the privacy leakage under certain conditions, regardless of the complexity of networks.}
		
		\item{We propose an Attention Privacy Leakage (APRIL) attack, to discover the Archilles' Heel. As an alternative, APRIL performs an optimization-based attack, apart from the closed-form attack. The attacks show that our results superior to SOTA.}
		
		
		\item{We suggest to switch the learnable position embedding to a fixed one as the defense against privacy attacks. Empirical results certify the effectiveness of our defending scheme. 
			
		}
	\end{itemize}

	\section{Preliminary}
	\label{sec:relatedwork}
	\noindent
	\textbf{Federated Learning.}
	Federated learning~\cite{17federated} offers the scheme that trains statistical models collaboratively involving multiple data owners. Due to the advances containing privacy, large-scale training, and distributed optimization, federated learning methods have been deployed by applications which require computing at the edge~\cite{qi2020privacy,hard2018federated,huang2019patient,brisimi2018federated}. In this scenario, we aim to learn a global model by locally processed \textit{client} data and communicating intermediate updates among each other. Formally, the typical goal is minimizing the following loss function $l$ with parameters $w$,
	\begin{equation}
	\min_{w} l_{w} (x, y),  \Hquad \textnormal{where}\Hquad l_{w} (x, y) \defeq \sum_{i=1}^{N}p_i l_{w}^i (x_i, y_i)
	\end{equation}
	where $p_i\geq 0$ and $\sum_i p_i=1$. Since the $N$ clients owns the private training data. Let $(x_i, y_i)$ denote samples available locally for the $i$th client, and $l_{w}^i (x_i, y_i)$ denote the local loss function. In order to preserve data privacy, clients periodically upload their gradients $\nabla_{w} l_{w}^i (x_i, y_i)$ computed on their own local batch. The server aggregates gradients from all clients, updates the model using gradient descent and then sends back the updated parameters to every client.
	
	\noindent
	\textbf{Gradient Leakage Attack.}
	As an \textit{honest-but-curious} adversary at the server side may reconstruct clients' private training data without mess up the training process, sharing gradients in federated learning is no longer safe for client data. Albeit recovering image using gradients comes up with the attackers under weak constraints, endeavors of the existing threat models mainly focus on two directions: \textit{optimization-based attacks} and \textit{closed-form attacks}. 
	
	The basic recovery mechanism is defined by optimizing an euclidean distance as follows,  
	\begin{equation}
	\min_{x'_i,y'_i} \|\nabla_{w} l_{w}^i (x_i, y_i)-\nabla_{w} l^i_{w} (x'_i, y'_i)\|^2
	\label{eq:gradmatching}
	\end{equation}
	Deep leakage~\cite{dlg} minimizes the matching term of gradients from dummy input $(x'_i, y'_i)$ and those from real input $(x_i, y_i)$\footnote{We omit the index $i$ for clients in followings to manifest that the algorithm can work for any client.}. 
	On the top of this proposal, iDLG~\cite{idlg} finds that in fact we can derive the ground-truth label from the gradient of the last fully connected layer. By eliminating one optimization objective in Eq.(\ref{eq:gradmatching}), the attack procedure becomes even faster and smoother. Also, Geiping \etal~\cite{ig} prove that inversion from gradient is strictly less difficult than recovery from visual representations. 
	GradInversion~\cite{gradinversion} incorporates heuristic image prior as regularization by utilizing BatchNorm matching loss and group consistency loss for image fidelity. 
	Lately, GIML~\cite{genprior} illustrates that a generative model pre-trained on data distribution can be exploited for reconstruction.  
	
	One essential challenge of optimization procedures is that there is no sufficient condition for the uniqueness of the optimizer. The closed-form attack, as another of the ingredients in this line, is introduced by Phong \etal~\cite{phong2018privacy}, which reconstructs inputs using a shallow network such as a single-layer perceptron. R-GAP~\cite{r-gap} is the first derivation-based approach to perform an attack on CNNs, which models the problem as linear systems with closed-form solutions. Compared to the optimization-based method, analytic gradient leakage heavily depends on the architecture of neural networks and thus cannot always guarantee a solution.    
	
	\noindent
	\textbf{Transformers.}
	Transformer~\cite{transformer} is introduced for neural machine translation to model the long-term correlation between tokens meanwhile represent dependencies between any two distant tokens. The key of outstanding representative capability comes from stacking multi-head self-attention modules. 
	Recently, vision Transformers and its variants are broadly used for powerful backbones~\cite{vit, deit, swin}, object detection~\cite{detr}, semantic segmentation~\cite{sstrans}, image generation~\cite{parmar2018image, jiang2021transgan, chen2020generative}, \etc.
	
	Given the fundamentals of vision Transformer, we will investigate the gradient leakage in terms of closed-formed and optimization-based manners. Thus far, almost all the gradient leakage attacks adopt CNNs as the testing ground, typically using VGG or ResNet. Besides, TAG~\cite{tag} conducts experiments on popular language models using Transformers without concerning any analytic solution as well as the function of position embedding. 
	

	\section{APRIL: Attention PRIvacy Leakage}
	In light of above missing for vision transformer, we first prove that gradient attacks on self-attention can be analytically conducted. Next, we will discuss the possible leakage from the position embedding based on its analytic solution, which naturally gives rise to two attack approaches.
	
	\subsection{Analytic Gradient Attack on Self-Attention}
	It has been proved that the closed-form solution for input $x$ can always be perfectly obtained on a fully-connected layer $\sigma(Wx+b)=z$, through deriving gradients \wrt weight $W$ and bias $b$. The non-linear function $\sigma$ is an activation~\cite{phong2018privacy}. 
	In this work, we delve into a more subtle formulation of a self-attention to demonstrate the existence of the closed-form solution.
	
	\begin{theorem}(Input Recovery).
		\label{theo:1}
		Assume a self-attention module expressed as:
		\begin{align}
		&Qz = q; Kz = k; Vz = v;\label{eq:qkv}\\ 
		&\frac{softmax(q\cdot k^{T})}{\sqrt{d_k}} \cdot v = h\label{eq:attn}\\ 
		&Wh = a; \label{eq:linear}
		\end{align}
		where $z$ is the input of the self-attention module, $a$ is the output of the module. Let $Q, K, V, W$ denote the weight matrix of query, key, value and projection, and $q, k, v, h$ denote the intermediate feature map. Suppose the loss function can be written as 
		\begin{equation*}
		l = l(f(a),y)
		\end{equation*}
		If the derivative of loss $l$ \wrt the input $z$ is known, then the input can be recovered uniquely from the network's gradients by solving the following linear system:
		
		\begin{equation*}
		\frac{\partial l}{\partial z}z^T = Q^T\frac{\partial l}{\partial Q} + K^T\frac{\partial l}{\partial K} + V^T\frac{\partial l}{\partial V}
		\end{equation*}
		
	\end{theorem}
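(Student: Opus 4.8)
The plan is to take the self-attention module apart only as far as is necessary: observe that the input $z$ enters the computation graph \emph{exclusively} through the three linear maps $q=Qz$, $k=Kz$, $v=Vz$, so the entire nonlinear core --- the $\mathrm{softmax}$, the $1/\sqrt{d_k}$ scaling, the multiplication by $v$, the projection $W$, and the head $f$ composed with the loss $l$ --- can be left as an opaque block whose only effect on the argument is through the three back-propagated vectors $\frac{\partial l}{\partial q}$, $\frac{\partial l}{\partial k}$, $\frac{\partial l}{\partial v}$. Nothing about that block will ever have to be differentiated explicitly, which is exactly what makes the attack ``analytic.''

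First I would compute the gradients \wrt the weight matrices. Since $q=Qz$ is linear in $Q$ with $z$ held fixed, the chain rule gives the rank-one identity $\frac{\partial l}{\partial Q}=\frac{\partial l}{\partial q}\,z^{T}$, and identically $\frac{\partial l}{\partial K}=\frac{\partial l}{\partial k}\,z^{T}$ and $\frac{\partial l}{\partial V}=\frac{\partial l}{\partial v}\,z^{T}$. Substituting these into the right-hand side of the claimed equation and pulling the common $z^{T}$ out on the right yields
\[
Q^{T}\frac{\partial l}{\partial Q}+K^{T}\frac{\partial l}{\partial K}+V^{T}\frac{\partial l}{\partial V}
=\Bigl(Q^{T}\frac{\partial l}{\partial q}+K^{T}\frac{\partial l}{\partial k}+V^{T}\frac{\partial l}{\partial v}\Bigr)z^{T}.
\]
Note that the left-hand side here uses only quantities the attacker actually has: the publicly known weights $Q,K,V$ and their shared gradients; the projection $W$ and $\frac{\partial l}{\partial W}$ never appear.

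Next I would close the loop with a second chain-rule application, this time \wrt $z$. Because the only branches out of $z$ are $q=Qz$, $k=Kz$, $v=Vz$, the multivariate chain rule gives precisely $\frac{\partial l}{\partial z}=Q^{T}\frac{\partial l}{\partial q}+K^{T}\frac{\partial l}{\partial k}+V^{T}\frac{\partial l}{\partial v}$, which is exactly the parenthesized factor above; hence the right-hand side equals $\frac{\partial l}{\partial z}\,z^{T}$, establishing the displayed linear system. For the recovery/uniqueness claim, this is a rank-one matrix equation $\frac{\partial l}{\partial z}\,z^{T}=M$ with $M$ and $\frac{\partial l}{\partial z}$ both available (the latter by hypothesis); provided $\frac{\partial l}{\partial z}\neq 0$, picking any index $i$ with $\bigl(\frac{\partial l}{\partial z}\bigr)_i\neq 0$ gives $z^{T}=M_{i,:}\big/\bigl(\frac{\partial l}{\partial z}\bigr)_i$, and every nonzero row produces the same vector, so $z$ is determined uniquely.

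The step I expect to be the real crux is the bookkeeping across the two chain-rule applications: keeping the matrix-calculus conventions mutually consistent (row-vs-column layout of $z$, and the placement of the transposes in $\frac{\partial l}{\partial Q}=\frac{\partial l}{\partial q}z^{T}$ versus $\frac{\partial l}{\partial z}=Q^{T}\frac{\partial l}{\partial q}+\dots$), and --- if one wants to be careful --- checking that the identity survives the case where $z$ is a matrix of tokens rather than a single vector, and flagging the degenerate case $\frac{\partial l}{\partial z}=0$ in which this equation alone does not pin down $z$.
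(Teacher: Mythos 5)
Your derivation is correct and follows essentially the same route as the paper: the two chain-rule identities $\frac{\partial l}{\partial Q}=\frac{\partial l}{\partial q}z^{T}$ (and analogues) and $\frac{\partial l}{\partial z}=Q^{T}\frac{\partial l}{\partial q}+K^{T}\frac{\partial l}{\partial k}+V^{T}\frac{\partial l}{\partial v}$, combined by right-multiplying by $z^{T}$. Your added discussion of uniqueness goes slightly beyond the paper's proof (which defers this to a separate ``solution feasibility'' remark about the $c\times c$ system being overdetermined when $c\gg p$); just note that for matrix-valued $z\in\mathbb{R}^{p\times c}$ the correct condition is that $\frac{\partial l}{\partial z}$ have full column rank $p$, not merely be nonzero, a caveat you already gesture at.
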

	
	
	\begin{proof}
		\rm{
			In spite of the non-linear formulation of self-attention modules, the gradients \wrt $z$ can be derived in a succinct linear equation:
			\begin{equation}
			\frac{\partial l}{\partial z}  = Q^T \frac{\partial l}{\partial q}  + K^T\frac{\partial l}{\partial k}  + V^T\frac{\partial l}{\partial v}   
			\label{eq:dx}
			\end{equation} 
			
			Again, according to the chain rule of derivatives, we can derive the gradients \wrt $Q$, $K$ and $V$ from \cref{eq:qkv}:
			\begin{equation}
			\frac{\partial l}{\partial Q} = \frac{\partial l}{\partial q}z^T  \quad
			\frac{\partial l}{\partial K} = \frac{\partial l}{\partial k}z^T  \quad
			\frac{\partial l}{\partial V} = \frac{\partial l}{\partial v}z^T  
			\label{eq:dQKV}
			\end{equation}
			
			By multiplying $z^T$ to both sides of \cref{eq:dx} and substituting \cref{eq:dQKV}, we obtain:
			\begin{equation}
			\begin{aligned}
			\frac{\partial l}{\partial z}z^T &= Q^T \frac{\partial l}{\partial q}z^T  + K^T\frac{\partial l}{\partial k} z^T + V^T\frac{\partial l}{\partial v}z^T \\
			&= Q^T\frac{\partial l}{\partial Q} + K^T\frac{\partial l}{\partial K} + V^T\frac{\partial l}{\partial V} \label{eq:finding}
			\end{aligned} 
			\end{equation}
			which completes the proof.
		}
	\end{proof}
	\begin{remark}
		Surprisingly we find that for a malicious attacker aiming to recover the input data $z$.  
		Since an adversary in the context of federated learning knows both learnable parameters and gradients \wrt them, in this case, $Q$, $K$, $V$ and $\frac{\partial l}{\partial Q}$, $\frac{\partial l}{\partial K}$, $\frac{\partial l}{\partial V}$. The right side of \cref{eq:finding} is known. As a result, once the derivative of the loss \wrt the input $\frac{\partial l}{\partial z}$ is exposed to the adversary, the attacker can easily get an accurate reconstruction of $z$ by solving the linear equation system in \cref{eq:finding}. 
	\end{remark}
	
	\begin{algorithm}[t!]
		\centering
		\caption{Closed-Form APRIL}
		\label{alg:closed-form}

		\begin{algorithmic}[1]
			\Require {Attention module: $F(z, w)$; 
				Module weights $w$;
				Module gradients $\frac{\partial l}{\partial w}$ 
				Derivative of loss \wrt $z$: $\frac{\partial l}{\partial z}$}
			\Ensure {Embedding feed into attention module: $z$}  
			\Procedure{APRIL-closed-form}{$F, w, \frac{\partial l}{\partial w}, \frac{\partial l}{\partial z}$}
			\State Extract $Q, K, V$ from module weights $w$
			\State Extract $\frac{\partial l}{\partial Q}, \frac{\partial l}{\partial K}, \frac{\partial l}{\partial V}$ from module gradients $\frac{\partial l}{\partial w}$
			\State $A \gets \frac{\partial l}{\partial z}$
			\State $b \gets Q^T\cdot \frac{\partial l}{\partial Q} + V^T \cdot \frac{\partial l}{\partial V} + K^T \cdot \frac{\partial l}{\partial K}$ 
			\State $z \gets A^{\dag}\cdot b$ \Comment{ $A^{\dag}$: Moore-Penrose pseudoinverse of $A$ \Hquad\quad\quad}
			\State $z \gets z^T $ \Comment{Transpose\quad}
			
			\EndProcedure
		\end{algorithmic}
	\end{algorithm}
	
	\noindent
	\textbf{Solution Feasibility.} Suppose the dimension of the embedding $z$ is $\mathbb{R}^{p \times c}$, with patch number $p$ and channel number $c$. This linear system has $p\times c$ unknown variables yet $c \times c$ linear constraints. Since deep neural networks normally have wide channels for the sake of expressiveness, $c \gg p$ in the most of model designs, which leads to an overdetermined problem and thereby a solvable result. 
	In other words, $z$ can be accurately reconstructed if $\frac{\partial l}{\partial z}$ is available. The entire procedure of the closed-form attack is presented in Alg.\ref{alg:closed-form}.
	
	
	

	\begin{figure}[t]
		\centering
		\includegraphics[width=0.8\linewidth]{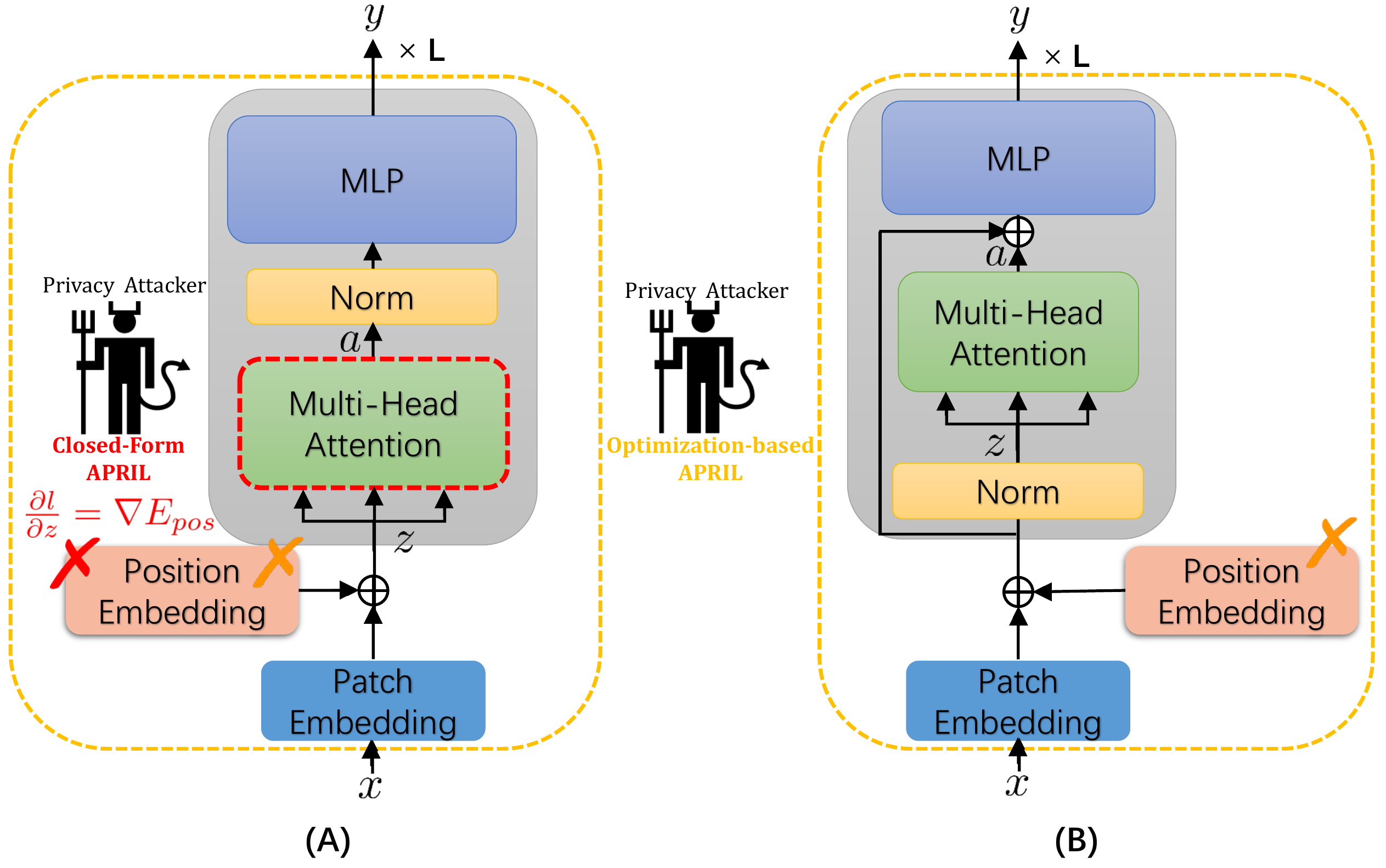}

		\caption{We consider two Transformer designs throughout the paper. (A): Encoder modules stack multi-head attention, normalization, and MLP in VGG-style. (B): A real-world design as introduced in ViT~\cite{vit}. The architecture in (A) satisfies the precondition of a closed-form APRIL attack, since the output of position embedding is exactly input for multi-head attention, showing by the red dashed line box. In contrast, the optimization-based APRIL attack can be placed in any design of architectures, showing by the yellow dashed line boxes in (A) and (B).}
		\label{fig:arch}
		\vspace{0.5cm}
	\end{figure}
	
	
	\subsection{Position Embedding: The Achilles' Heel}
	Now we focus on the how to access the critical derivative $\frac{\partial l}{\partial z}$ by introducing the leakage caused by the position embedding. Under general settings of federated learning, the sensitive information related with $z$ is invisible from users' side. Here, we show that $\frac{\partial l}{\partial z}$ is unfortunately exposed by gradient sharing for vision Transformers with a learnable position embedding. Specifically, we give the following theorem to illustrate the leakage.      
	
	
	
	\begin{theorem} (Gradient Leakage).
		\label{theo:2}
		For a Transformer with learnable position embedding $E_{pos}$, the derivative of loss \wrt $E_{pos}$ can be given by 
		\begin{equation}
		\frac{\partial l}{\partial E_{pos}}=\frac{\partial l}{\partial z}
		\label{eq:posder}
		\end{equation}
		where $\frac{\partial l}{\partial z}$ is defined by the linear system in \cref{theo:1}. 
	\end{theorem}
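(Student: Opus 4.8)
The plan is to trace exactly how the learnable position embedding $E_{pos}$ enters the forward computation and then finish with a single chain-rule step. In a vision Transformer the array of token embeddings fed into the (first) self-attention block is formed by adding $E_{pos}$ element-wise to the patch embeddings produced by the patch-projection layer (with a class token optionally prepended); writing $x_{patch}$ for those patch embeddings, this is $z = x_{patch} + E_{pos}$, where $z$ is precisely the input denoted $z$ in \cref{theo:1}. The key structural observation is that $E_{pos}$ influences the loss \emph{only} through $z$: the quantity $x_{patch}$ is a function of the raw image patches and the projection weights, none of which depend on $E_{pos}$.

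Given this, the argument is one line. Since $l$ is scalar and $z=x_{patch}+E_{pos}$ entrywise, the chain rule gives, componentwise,
\begin{equation*}
\frac{\partial l}{\partial (E_{pos})_{ij}} = \sum_{k,\ell}\frac{\partial l}{\partial z_{k\ell}}\,\frac{\partial z_{k\ell}}{\partial (E_{pos})_{ij}} = \frac{\partial l}{\partial z_{ij}},
\end{equation*}
because $\partial z_{k\ell}/\partial (E_{pos})_{ij}$ is $1$ when $(k,\ell)=(i,j)$ and $0$ otherwise; hence $\partial l/\partial E_{pos}=\partial l/\partial z$, which is \cref{eq:posder}. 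Composing this with \cref{theo:1} yields the attack: the adversary already observes $\partial l/\partial E_{pos}$ because $E_{pos}$ is a shared learnable parameter, so it reads off $\partial l/\partial z$ for free and plugs it into the linear system of \cref{theo:1} to reconstruct $z$.

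Because the computation is essentially immediate once the computational graph is pinned down, the only real care needed is bookkeeping about which object ``$z$'' refers to, and this is where I expect the subtlety to sit. The clean identity $\partial l/\partial E_{pos}=\partial l/\partial z$ requires the position-embedding addition to lie \emph{directly} at the input of the attention module --- the VGG-style design (A) of \cref{fig:arch} --- so that the $z$ of \cref{theo:1} is literally $x_{patch}+E_{pos}$; I would state the theorem for that design. If a normalization layer or further mixing is interposed before attention (design (B)), one must either redefine $z$ as the post-addition tensor (in which case $\partial l/\partial z$ is no longer exactly the tensor feeding the self-attention linear system) or back-propagate through the intervening layer, and I would handle that in a separate remark rather than inside the proof. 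A minor additional point is shape matching when a class token is prepended: $E_{pos}$ is taken with the same shape as the full token array, so the identity Jacobian above still holds on every entry.
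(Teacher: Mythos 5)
Your proof is correct and follows essentially the same route as the paper's: both decompose $z = E_{patch} + E_{pos}$ and apply the chain rule to the additive dependence, the paper doing so in one line where you spell out the identity Jacobian componentwise. Your added caveat that the identity requires the position-embedding addition to sit directly at the attention input (design (A) of \cref{fig:arch}) matches the paper's own remark rather than diverging from it.
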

	
	\begin{proof}\rm{
			Without loss of generality, the embedding $z$ defined by \cref{theo:1} can be divided into a patch embedding $E_{patch}$ and a learnable position embedding $E_{pos}$ as,   
			\begin{equation}
			z = E_{patch} + E_{pos}
			\label{eq:posemb}
			\end{equation}
			Straightforwardly, we compute the derivative of loss \wrt $E_{pos}$ using \cref{eq:posemb}, \cref{eq:posder} holds.
	}\end{proof}
	\begin{remark}
		The sensitive information $\frac{\partial l}{\partial z}$ is exactly the same as the gradient of the position embedding $\frac{\partial l}{\partial E_{pos}}$, denoting as $\nabla E_{pos}$ for simplicity. As model gradients are sharing, $\nabla E_{pos}$ is available for not only legal users but also potential adversaries, which means a successful attack on self-attention inputs.        
	\end{remark}
	
	While vision Transformers~\cite{vit, swin, wu2021rethinking} embody prominent accuracy raise using learnable position embeddings rather than the fixed ones, updating of parameter $E_{pos}$ will result in privacy-preserving troubles based on our theory. More severely, the attacker only requires a learnable position embedding and a self-attention stacked at the bottom in VGG-style, regardless of the complexity of the rest architecture, as shown in \cref{fig:arch} (A). At a colloquial level, we suggest two strategies to alleviate this leakage, which is either employing one fixed position embedding instead of the learnable one or updating $\nabla E_{pos}$ only on local clients without transmission.                                               
	
	
	
	
	
	\begin{algorithm*}[t!]
		\centering
		\caption{Optimization-based APRIL}
		\label{alg:opt-based}

		\begin{algorithmic}[1]
			\Require{Transformer with learnable position embedding: $F(x, w)$;
				Module parameter weights : $w$;
				Module parameter gradients: $\nabla w $;
				APRIL loss term scaler: $\alpha$}
			\Ensure{Image feed into the self-attention module: $x$}  
			\Procedure{APRIL-optimization-attack}{$F, w, \nabla w$}
			\State Extract final linear layer weights $w_{fc}$ from $w$
			\State $y \gets i$ \quad \textnormal{s.t.} $\nabla {w_{fc}^{i}}^T  \nabla w_{fc}^{j} \le 0, \forall j\neq i $ \Comment{ Extract ground-truth label using the iDLG trick\quad\quad\quad}
			\State Extract position embedding layer's gradients $\nabla E_{pos}$ from $\nabla w$
			\State $x' \gets \mathcal{N}(0,1)$ \Comment{Initialize the dummy input\quad}
			
			\State \textbf{While}\quad\textnormal{not converged}\quad\textbf{do}
			
			\State \quad $\frac{\partial l}{\partial w} ' \gets \partial l(F(x'; w), y) / \partial w$ \Comment{Calculate dummy gradients\quad}
			
			\State \quad $\mathcal{L}_G = \| \nabla w'  - \nabla w \|^2_F$ \Comment{Calculate L-2 difference between gradients\quad}
			
			\State \quad $\frac{\partial l}{\partial E_{pos}'} \gets \partial l(F(x'; w), y) / \partial E_{pos}'$ \Comment{Calculate the derivative of dummy loss \wrt dummy input\quad}
			
			\State \quad $\mathcal{L}_{A} = - \frac{<\nabla E_{pos}, \nabla E_{pos}^{'}>}{\| \nabla E_{pos}\| \cdot \|\nabla E_{pos}^{'}\|}$	 \Comment{Calculate cosine distance between derivative of input\quad}
			
			\State \quad $\mathcal{L} = \mathcal{L}_G + \alpha \mathcal{L}_A$
			
			\State \quad $x' \gets x' - \eta \nabla_{x'} \mathcal{L}$ \Comment{Update the dummy input\quad\quad\quad}
			\EndProcedure
		\end{algorithmic}
	\end{algorithm*}

	\subsection{APRIL Attacks on Vision Transformer}
	So far the analytic gradient attack have succeeded in reconstructing input embedding $z$ meanwhile obtaining the gradient of position embedding $\nabla E_{pos}$. One question is that can APRIL take advantage of the sensitive information to further recover the original input $x$. The answer is affirmative.     
	
	\noindent
	\textbf{Closed-Form APRIL.} 
	As a matter of the fact, APRIL attacker can inverse the embedding via a linear projection to get original input pixels. For a vision Transformer, the input image is partitioned into many patches and sent through a so-called ``Patch Embedding'' layer, defined as
	\begin{equation}
	E_{patch} = W_{p} x 
	\end{equation}
	The bias term is omitted since it can be represented in an augmented matrix $W_{p}$. With $W_{p}$, pixels are linearly mapped to features, and the attacker calculates the original pixels by left-multiply its pseudo-inverse.
	
	\noindent
	\textbf{Optimization-based APRIL.} 
	Given the linear system in \cref{theo:1}, it can also be decomposed into two components as $z$ and $\nabla E_{pos}$ based on Eq.(\ref{eq:posder}). Arguably, component $\nabla E_{pos}$ indicates the directions of the gradients of position embeddings and contributes to the linear system indepentently with data. Considering the significance of the learnable position embedding in gradient leakage, intuitively, matching the updating direction of $E_{pos}$ with an direction caused by dummy data can do benefits on the recovery. Therefore, we proposed an optimization-based attacker with constraints on the distance of $\nabla E_{pos}$. 
	To do so, apart from architecture in \cref{fig:arch}(A), typical design of ViT illustrated in \cref{fig:arch}(B) using normalization and residual connections with a different stacked order can also be attacked by our proposed APRIL. 

	For expression simicity, we use $\nabla w'$ and $\nabla w$ denote the gradients of parameter collections for dummy data and real inputs, respectively. In detail, the new integrated term of gradients of $\nabla E_{pos}$ is set as $ \mathcal{L}_A$. For modelling directional information, we utilize a cosine similarity between real and dummy position embedding derivatives as a regularization. The intact optimization problem is written as
	\begin{equation}
	\begin{aligned}
	\mathcal{L} & = \mathcal{L}_G + \alpha \mathcal{L}_A \\ & = \| \nabla w' - \nabla w \|^2_F - \alpha\cdot\frac{<\nabla E_{pos}, \nabla E_{pos}^{'}>}{\| \nabla E_{pos}\| \cdot \|\nabla E_{pos}^{'}\|}.
	\label{eq:opt-based}
	\end{aligned}
	\end{equation}
	where hyperparameter $\alpha$ balances the contributions of two matching losses. Eventually, we set Eq.(\ref{eq:opt-based}) is another variant of our proposed method, optimization-based APRIL attack. The associated algorithm is described in Alg.\ref{alg:opt-based}. By enforcing a gradient matching on the learnable position embedding, it is plaguily easy to break privacy in a vision Transformer.

	\section{Experiments}
	\label{sec:experiments}
	\noindent
	In this section, we aspire to carry out experiments to answer the following questions: (1) To what extend can APRIL break privacy of a Transformer? (2) How strong is the APRIL attack compared to existing privacy attack methods? (3) What defensive strategy can we take to alleviate APRIL attack? (4) How to testify the functionality of position embedding in privacy preserving?
	
	We mainly carry out experiments in the setting of image classification; however, APRIL as a universal attack for Transformers can also be performed in a language task setting.
	\begin{figure*}[t]
		\centering
		\vspace{-4mm}
		\includegraphics[width=0.7\linewidth]{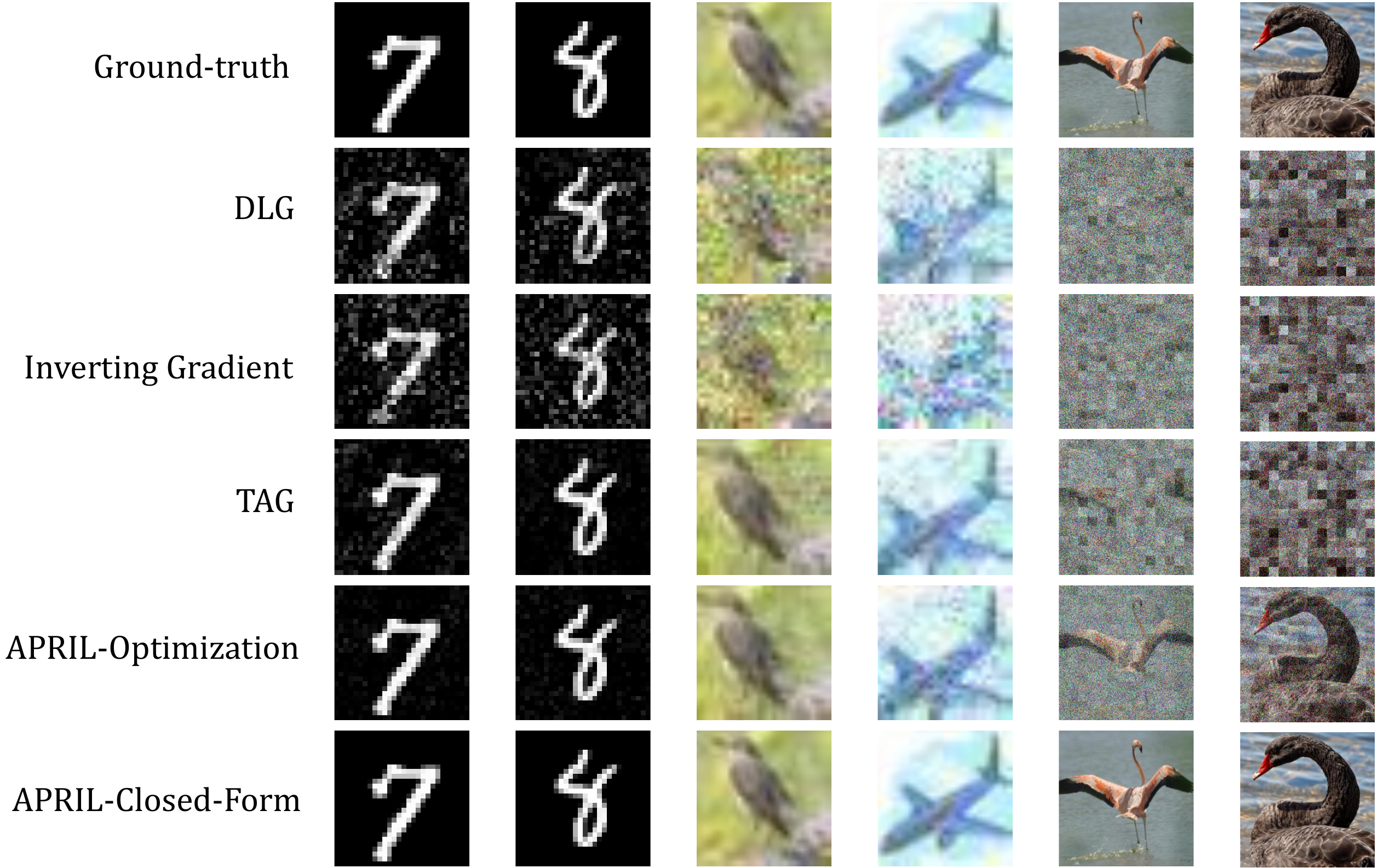}

		\caption{Results for different privacy attacking approaches on Architecture (A). For optimization-based attacks, we use an Adam optimizer to update 800 iterations for MNIST, 1500 iterations for CIFAR-10 and 5000 iterations for ImageNet. Please zoom-in to see details. We put more results in Appendix.}
		\label{fig:rescomp}
	\end{figure*}
	
	\begin{table*}[t]
		\centering
		
		\label{tab:MSE}
		\scalebox{0.8}{
			\begin{tabular}{|c|c|c|c|c|c|c|c|}
				\hline
				\multicolumn{2}{|c|}{\multirow{2}{*}{Attack}}  & \multicolumn{2}{c|}{MNIST}&\multicolumn{2}{c|}{CIFAR-10}&\multicolumn{2}{c|}{ImageNet}\\
				
				\cline{3-8}
				
				\multicolumn{2}{|c|}{} &MSE&SSIM & MSE & SSIM & MSE & SSIM  \\
				\hline
				
				\multicolumn{2}{|l|}{DLG~\cite{dlg}} &1.291e-04 $\pm$ 2.954e-04	&0.997 $\pm$ 0.003	& 0.017 $\pm$ 0.009	&0.959$\pm$0.045	&1.328$\pm$0.593 & 0.056 $\pm$ 0.027\\ 
				\hline		
				
				\multicolumn{2}{|l|}{IG~\cite{ig}} &0.043$\pm$0.022	&0.833$\pm$0.076	&0.125$\pm$0.102	&0.635$\pm$0.165	&1.671$\pm$0.653 & 0.029$\pm$0.013\\ 
				\hline		
				
				\multicolumn{2}{|l|}{TAG~\cite{tag}} &\textbf{3.438e-05 $\pm$ 1.322e-05}	&\textbf{0.998$\pm$0.002}	&0.006 $\pm$ 0.005	&0.965$\pm$0.047	&1.180 $\pm$ 0.473 & 0.062 $\pm$ 0.026\\ 
				\hline
				
				\multicolumn{2}{|l|}{APRIL} &4.796e-05$\pm$3.593e-05  	&\textbf{0.998 $\pm$ 0.002}	&\textbf{0.002$\pm$0.006}	&\textbf{0.991 $\pm$ 0.027}	&\textbf{1.092$\pm$0.663} & \textbf{0.099 $\pm$ 0.046}\\ 
				\hline
				
			\end{tabular}
		}
		\caption{ Mean and standard deviation for MSE of 500 reconstructions on MNIST, CIFAR-10 and ImageNet validation datasets, respectively. We randomly selected 50 images from each class in MNIST and CIFAR-10, and one image for random 500 classes in ImageNet.}
	\end{table*}
	\begin{figure*}[h]
		\centering
		\includegraphics[width=\linewidth]{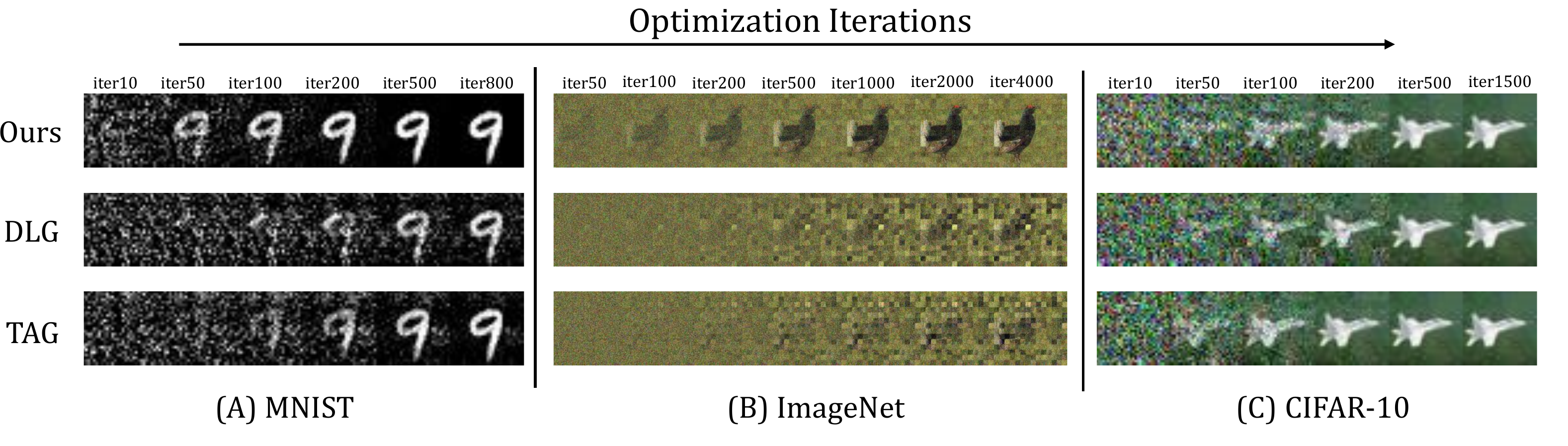}
		\vspace{-0.5cm}
		\caption{Visualization of the optimization process for optimization-based APRIL, DLG and TAG. Our approach has faster convergence speed and does not easily fall into bad local minima, thus yields a prominently better reconstruction result.}
		\setlength{\belowcaptionskip}{-8cm}
		\label{fig:evolution}
	\end{figure*}
	
	
	We carry out experiments on two different architectures, as illustrated in \cref{fig:arch}, architecture (A) has a position embedding layer directly connected to attention module, making it possible to perform APRIL-closed-form attack. Architecture (B) has the same structure as ViT-Base ~\cite{vit}, which is composed of multiple encoders, each with a normalization layer before attention module as well as a residual connection. For small datasets like CIFAR and MNIST, we refer to the implementation of ViT-CIFAR\footnote{https://github.com/omihub777/ViT-CIFAR}. We set the hidden dimension to 384, attention head to 4, and partition input images into 4 patches. The encoder depth is 4, after that the classification token is connected to a classification head. For experiments on ImageNet, we follow the original ViT design\footnote{https://github.com/lucidrains/vit-pytorch} , which includes 16x16 image patch size, 12 attention heads with hidden dimensions of 768.
	
	\subsection{APRIL as the Gradient Attack}
	We first apply APRIL attacks on Architecture (A) and compare it with other attacking approaches. As \cref{fig:rescomp} shows, closed-form APRIL attack provides a perfect reconstruction, which shows nearly no difference to the original input, which proves the correctness of our theorem.
	Comparing optimization-based attacks, for easy tasks like MNIST and CIFAR with a clean background, all existing attacking algorithms show their ability to break privacy, although DLG~\cite{dlg} and IG~\cite{ig} have some noises in their results. 
	The comparison is obvious for ImageNet reconstructions, where DLG, IG and TAG reconstructions are nearly unrecognizable to humans, with strong block artifacts. In contrast, the proposed APRIL-Optimization attack behaves prominently better, which reveals quite a lot of sensitive information from the source image, including details like the color and shape of the content.
	
	We further study the optimization procedure of reconstruction, shown in \cref{fig:evolution}. We illustrate the updating process of the dummy image. We can observe that all three approaches can break some sort of privacy, but they differ in convergence speed and final effects.  An apparent observation is that our optimization-based APRIL converges consistently faster than the other two. Besides, our approach generally ends up at a better terminal point, which results in smoother and cleaner image reconstructions.
	
	Apart from visualization results, we want to have a quantitative comparison between these optimization-based attacks. We carry out this experiment on Architecture(B), where we do not have the condition to use closed-form APRIL attack.  The statistical results from \cref{tab:MSE} shows  consistent good performance of APRIL, and we obtain best results nearly across every task setting. 
	
	Finally, we try to attack batched input images. As shown in \cref{fig:batch_opt}, our optimization results on batched input achieved impressive results as well. Note here we used the trick introduced by Yin \etal ~\cite{gradinversion} to restore batch labels before optimization. More results are put in Appendix. 
	\begin{figure}
		\centering	
		\includegraphics[width=\linewidth]{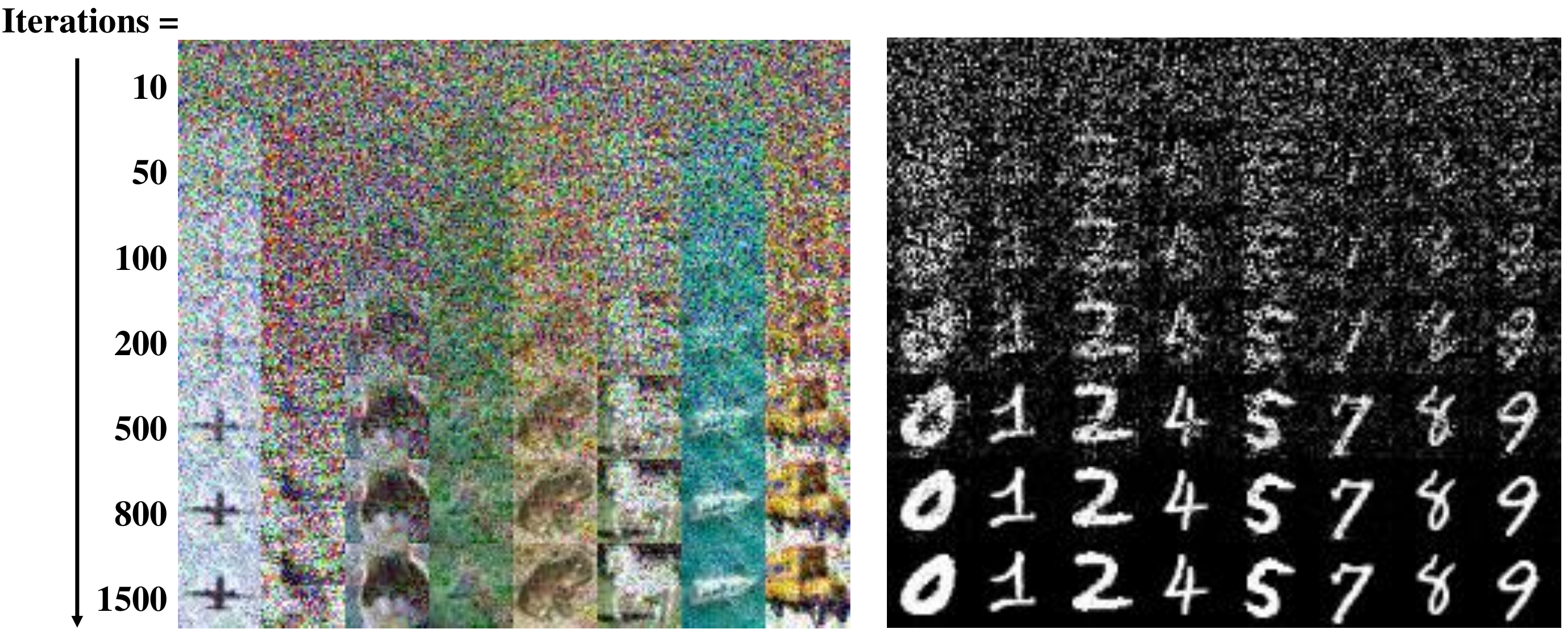}
		\caption{Optimization-based APRIL attack on batched inputs.}
		
		\label{fig:batch_opt}
	\end{figure}
	It's worth mentioning that the use of a closed-form APRIL attack is limited under batched setting, since the gradients are contributed by all samples in a batch, and we can only solve an "averaged" version of $z$ in \cref{eq:finding}. We give more reconstruction results and discuss more thoroughly on the phenomenon in Appendix.

	All experiments shown above demonstrate that the proposed APRIL outperforms all existing privacy attack approaches in the context of Transformer, thus posing a strong threat to Vision Transformers. 
	
	\subsection{APRIL-Inspired Defense Strategy}
	\label{subsec:nopos}
	\noindent
	\textbf{How robust is the closed-form APRIL?} 
	In the last subsection, we show that under certain conditions, closed-form APRIL attack can be executed to get almost perfect reconstructions. The execution of this attack is based on solving a linear system. Linear systems can be unstable and ill-conditioned when the condition number is large. With this knowledge, we are interested to know how much disturbance can APRIL bear to remain a good attack? We discuss a few defensive strategies towards APRIL. 
	
	\begin{figure}
		\centering
		\begin{minipage}[t]{0.22\linewidth}
			\centering
			\includegraphics[width=\linewidth]{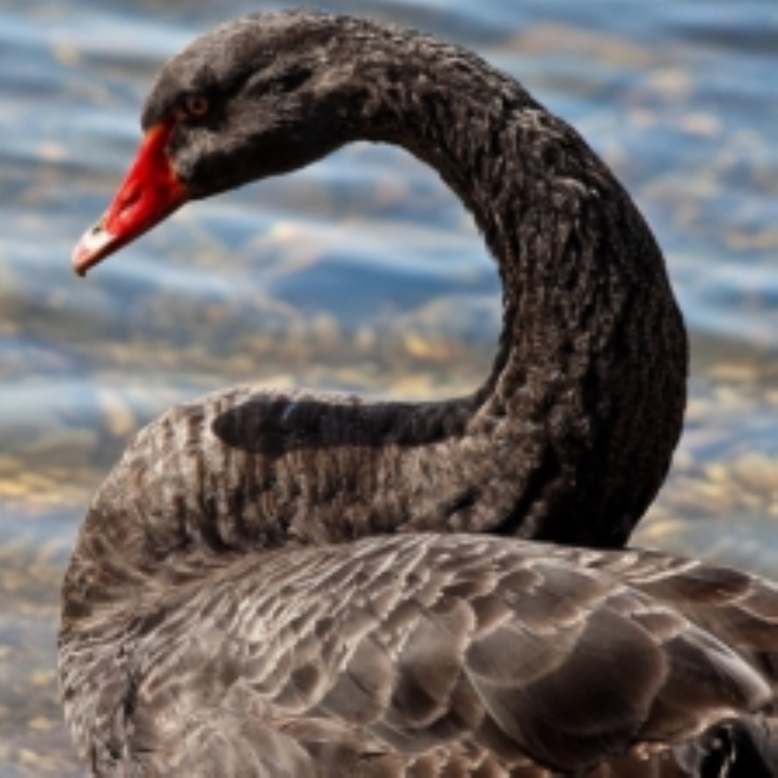}
			\vspace{-8mm}
			\caption*{\footnotesize hidden dimension=768}
		\end{minipage}	
		\hfill
		\begin{minipage}[t]{0.22\linewidth}
			\centering
			\includegraphics[width=\linewidth]{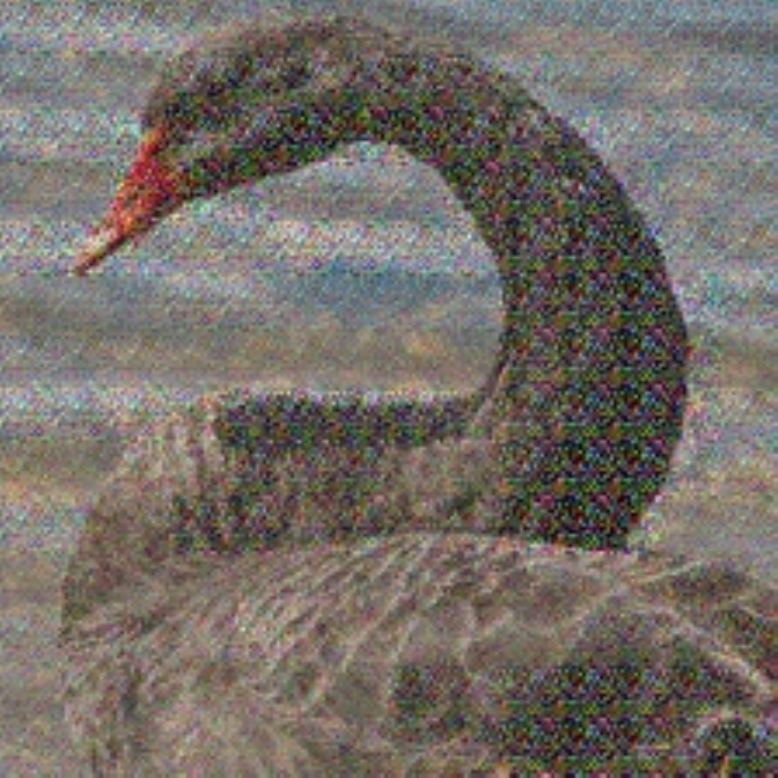}
			\vspace{-8mm}
			\caption*{\footnotesize hidden dimension=384}
		\end{minipage}
		\hfill
		\begin{minipage}[t]{0.22\linewidth}
			\centering
			\includegraphics[width=\linewidth]{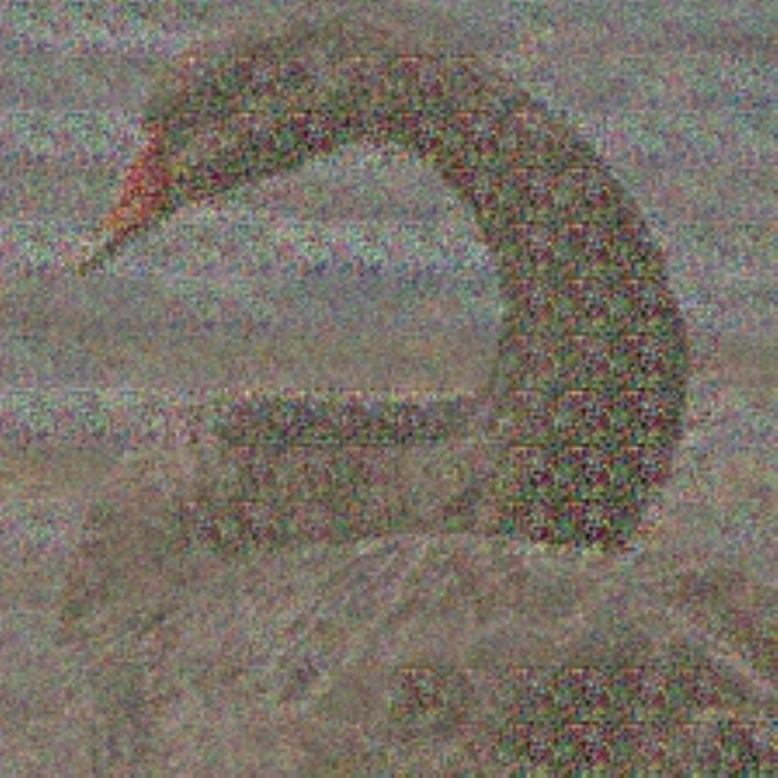}
			\vspace{-8mm}
			\caption*{\footnotesize hidden dimension=192}
		\end{minipage}	
		\hfill
		\begin{minipage}[t]{0.22\linewidth}
			\centering
			\includegraphics[width=\linewidth]{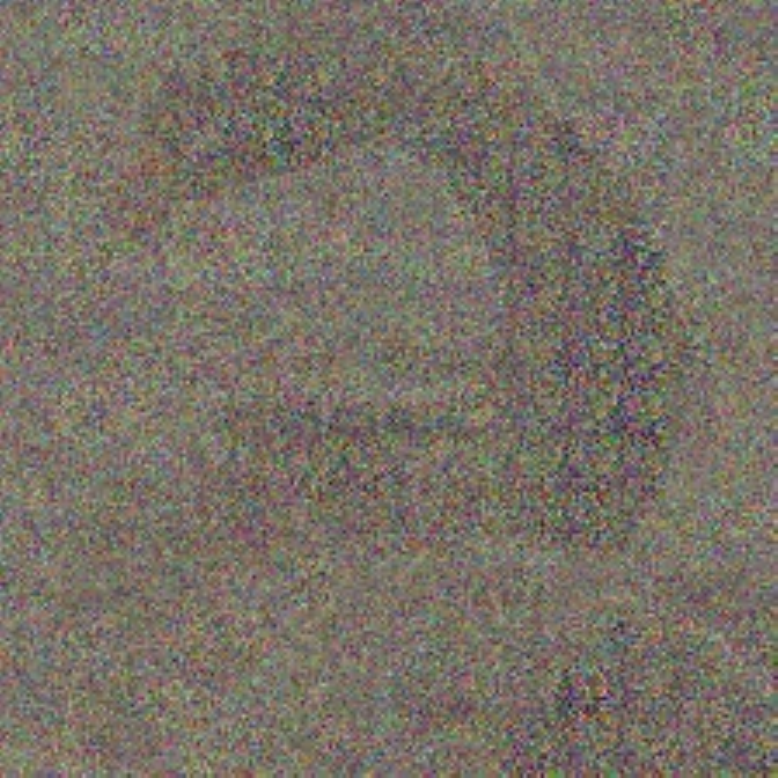}
			\vspace{-8mm}
			\caption*{\footnotesize hidden dimension=96}
		\end{minipage}
		
		\caption{Influences of varying hidden dimension to the reconstruction of APRIL attack.}
		\vspace{0.8cm}
		\label{fig:hidden}
	\end{figure}

	We first testify the influence of changing hidden channel dimensions. A successful closed-form reconstruction relies on the linear system with $p \cdot c$ unknowns and $c \cdot c$ constraints, to be overdetermined. As common configuration suggests  $c$  far larger than $p$,  we deem the linear system to be solvable. To test the robustness of APRIL under different architecture settings, we try four different hidden dimensions. As \cref{fig:hidden} shows, using the original configuration of ViT-base~\cite{vit} cannot be privacy-preserving, the original input image can be entirely leaked by closed-form APRIL attack. Only by shrinking hidden dimensions to a small value (\eg, half of the patch number) can we have solid protection. However, in this configuration, we doubt the network's capacity to gain high accuracy with such small channel number.

	\begin{figure}[h]
		\centering
		\begin{minipage}[t]{0.22\linewidth}
			\centering
			\includegraphics[width=\linewidth]{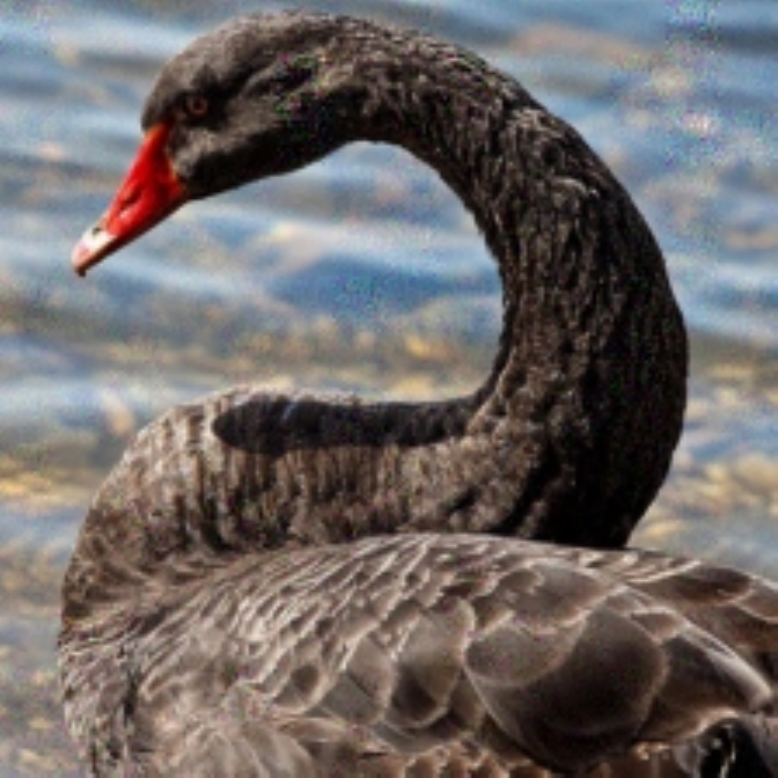}
			\vspace{-8mm}
			\caption*{\footnotesize {Gaussian Var = 0.1x grad norm}}
		\end{minipage}	
		\hfill
		\begin{minipage}[t]{0.22\linewidth}
			\centering
			\includegraphics[width=\linewidth]{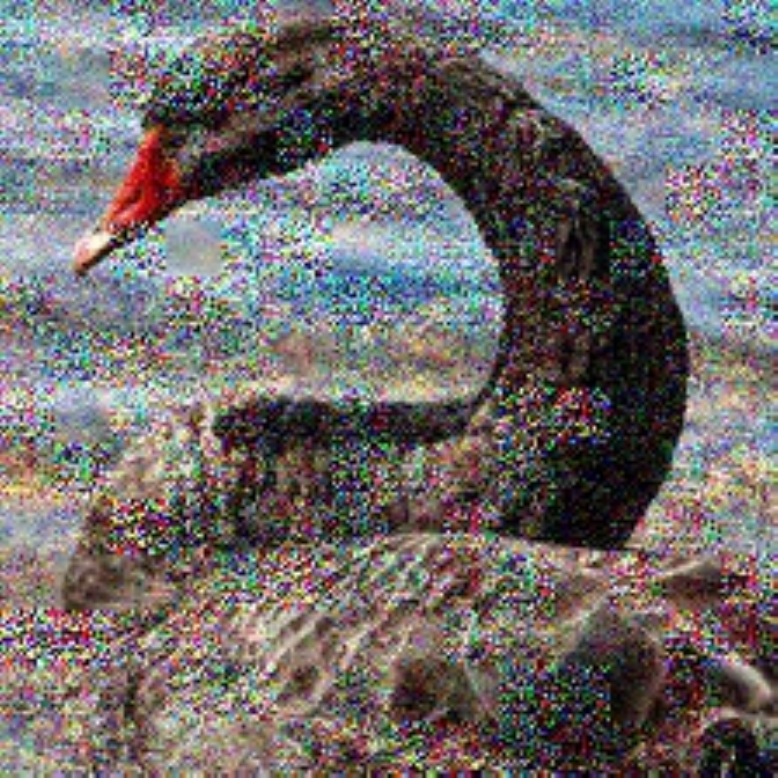}
			\vspace{-8mm}
			\caption*{\footnotesize Gaussian Var = grad norm}
		\end{minipage}
		\hfill
		\begin{minipage}[t]{0.22\linewidth}
			\centering
			\includegraphics[width=\linewidth]{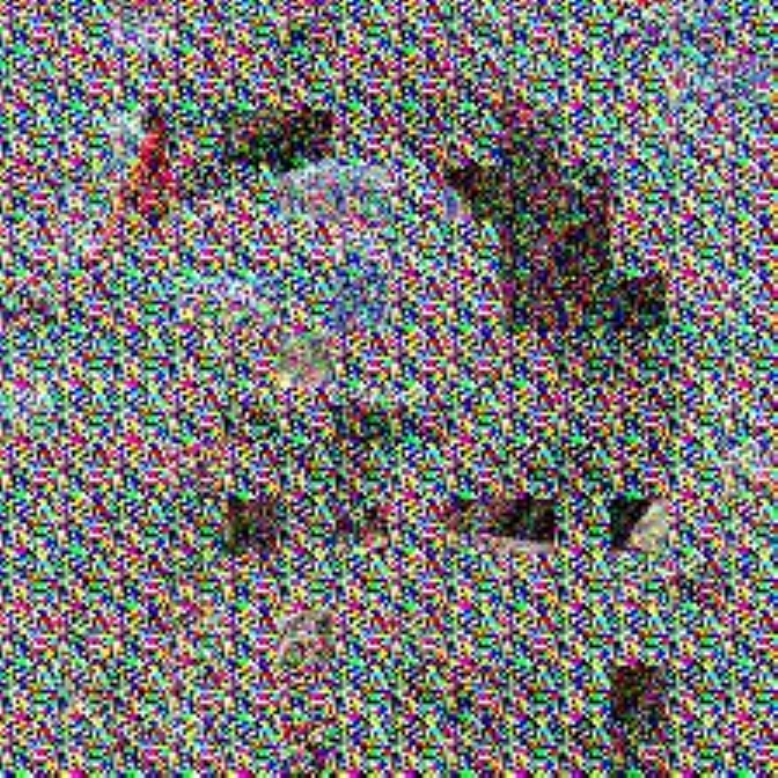}
			\vspace{-8mm}
			\caption*{\footnotesize Gaussian Var = 3x grad norm}
		\end{minipage}	
		\hfill
		\begin{minipage}[t]{0.22\linewidth}
			\centering
			\includegraphics[width=\linewidth]{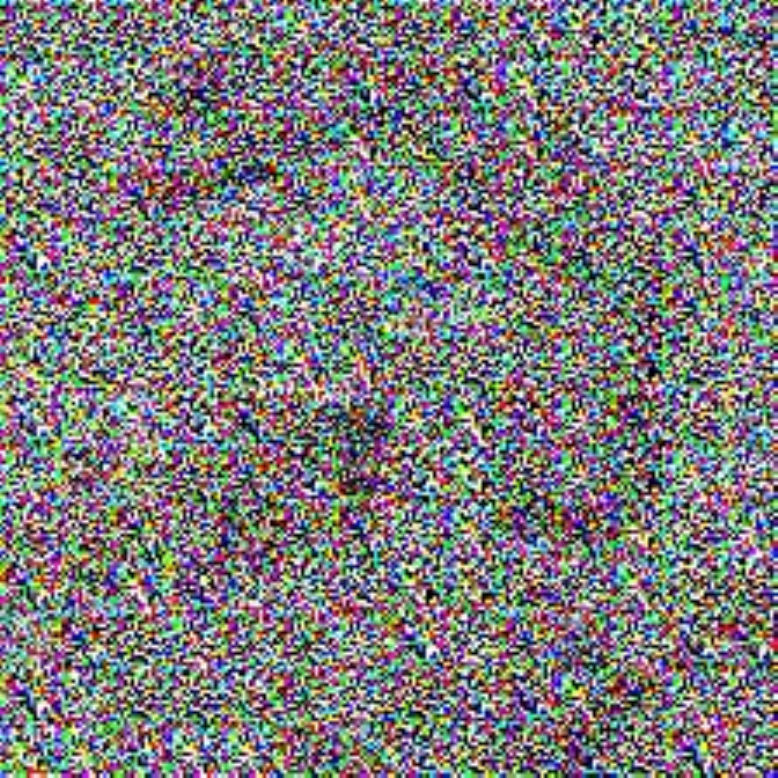}
			\vspace{-8mm}
			\caption*{\footnotesize Gaussian Var = 10x grad norm}
		\end{minipage}
		
		\begin{minipage}[t]{0.22\linewidth}
			\centering
			\includegraphics[width=\linewidth]{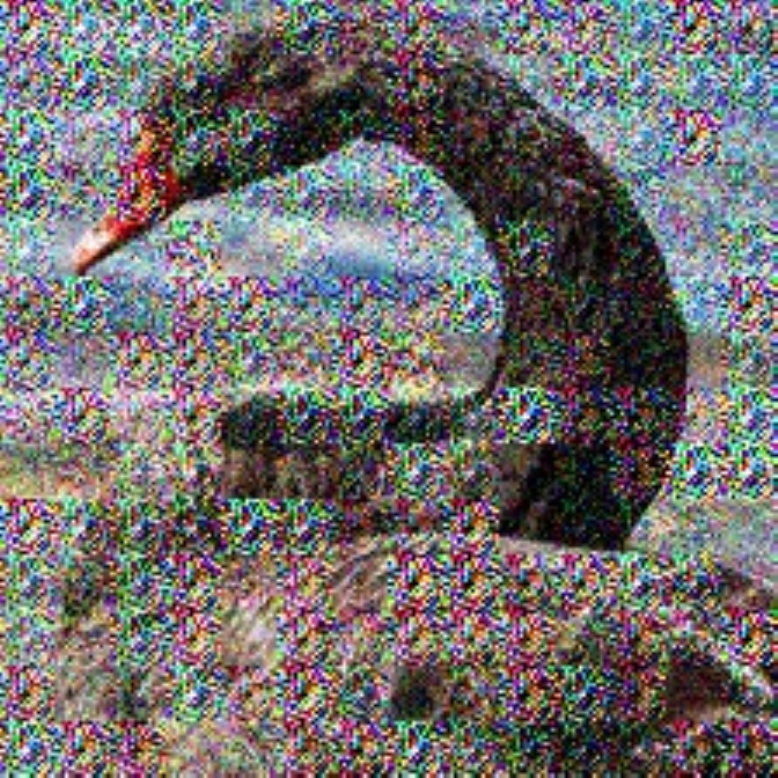}
			\vspace{-8mm}
			\caption*{\footnotesize Laplacian Var = 0.01x grad norm}
		\end{minipage}	
		\hfill
		\begin{minipage}[t]{0.22\linewidth}
			\centering
			\includegraphics[width=\linewidth]{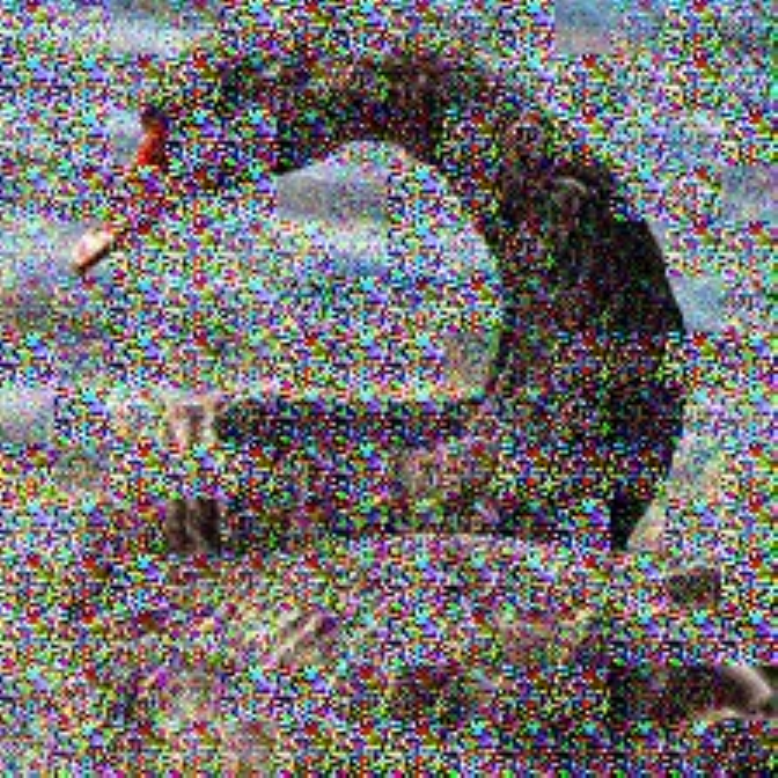}
			\vspace{-8mm}
			\caption*{\footnotesize Laplacian Var = 0.1x grad norm}
		\end{minipage}
		\hfill
		\begin{minipage}[t]{0.22\linewidth}
			\centering
			\includegraphics[width=\linewidth]{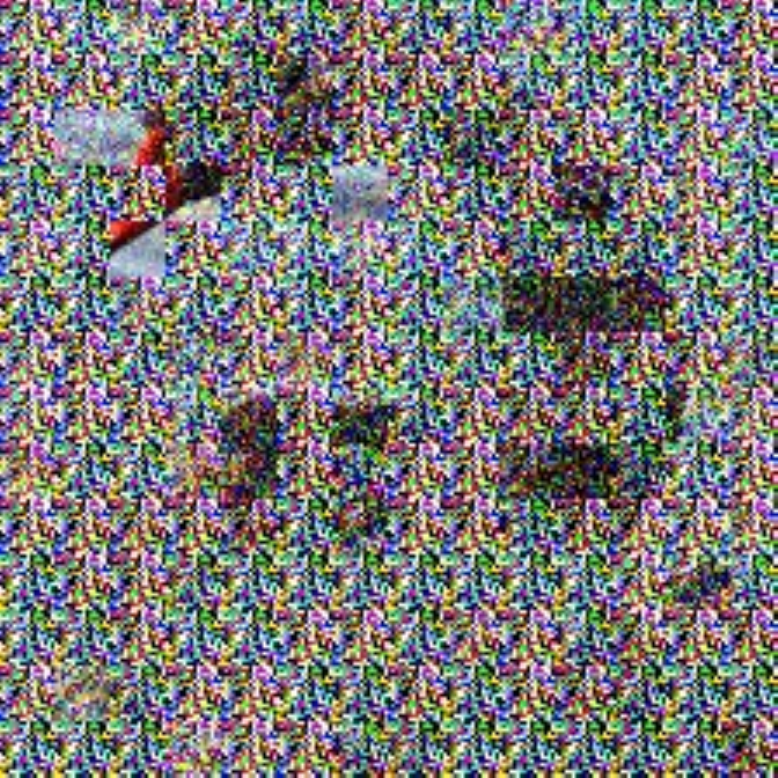}
			\vspace{-8mm}
			\caption*{\footnotesize Laplacian Var =  grad norm}
		\end{minipage}	
		\hfill
		\begin{minipage}[t]{0.22\linewidth}
			\centering
			\includegraphics[width=\linewidth]{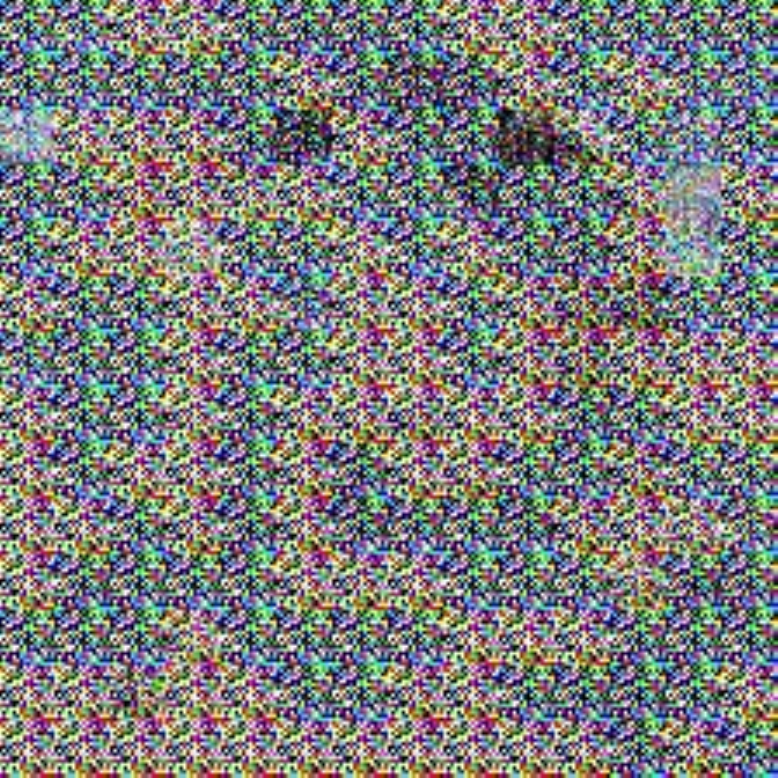}
			\vspace{-8mm}
			\caption*{\footnotesize Laplacian Var = 3x grad norm}
		\end{minipage}	
		\vspace{-2mm}
		\caption{Influences of adding noise to gradients.}
		
		\label{fig:noise}
	\end{figure}

	\begin{figure}
		\centering
		\begin{minipage}[h]{0.85\linewidth}
			\centering
			\includegraphics[width=\linewidth]{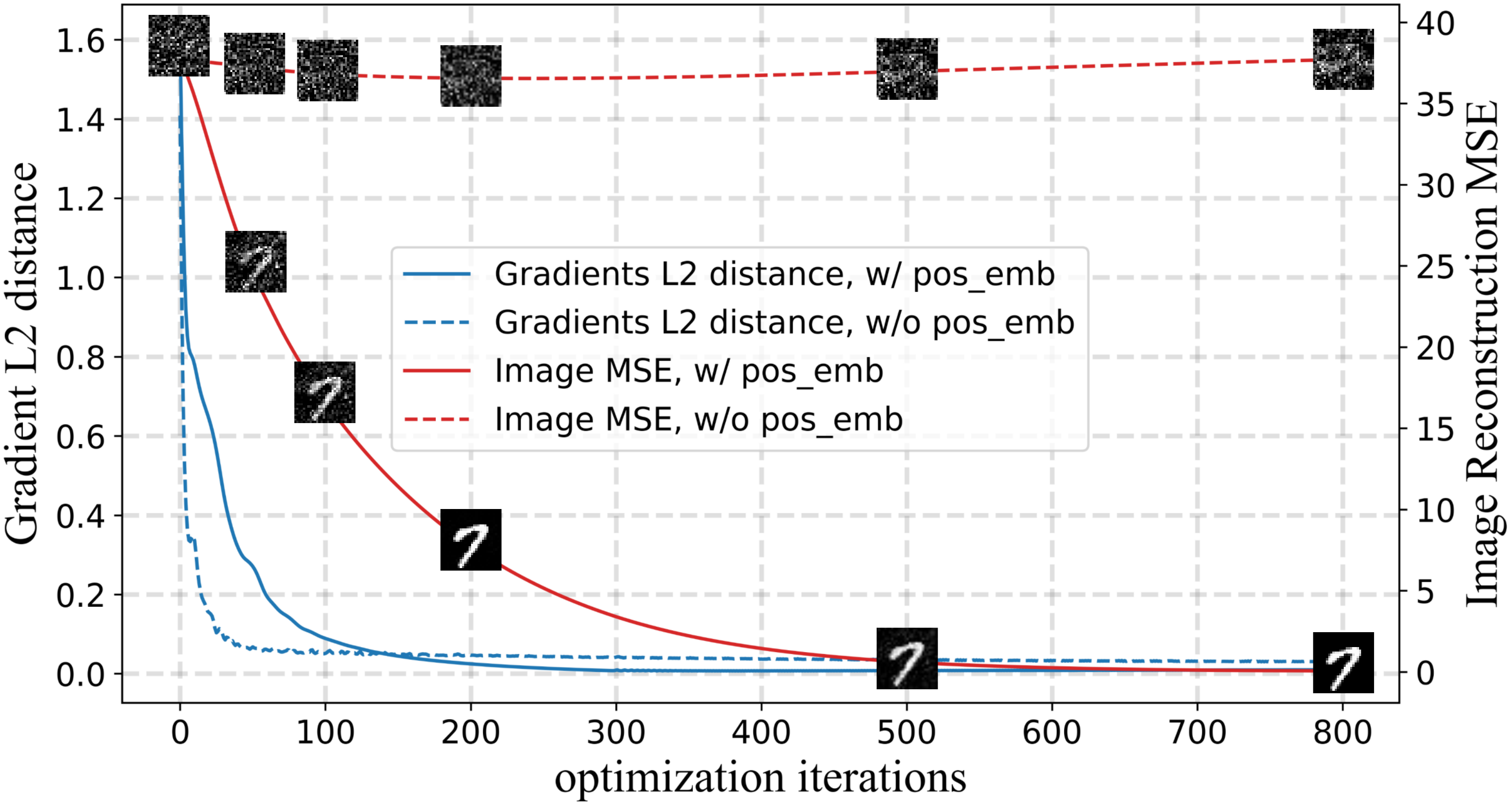}
			\vspace{-7mm}
			\caption*{(A)\ Gradient l2 loss and image MSE on Architecture A}
		\end{minipage}	
		
		\vspace{2cm}
		
		\begin{minipage}[h]{0.85\linewidth}
			\centering
			\includegraphics[width=\linewidth]{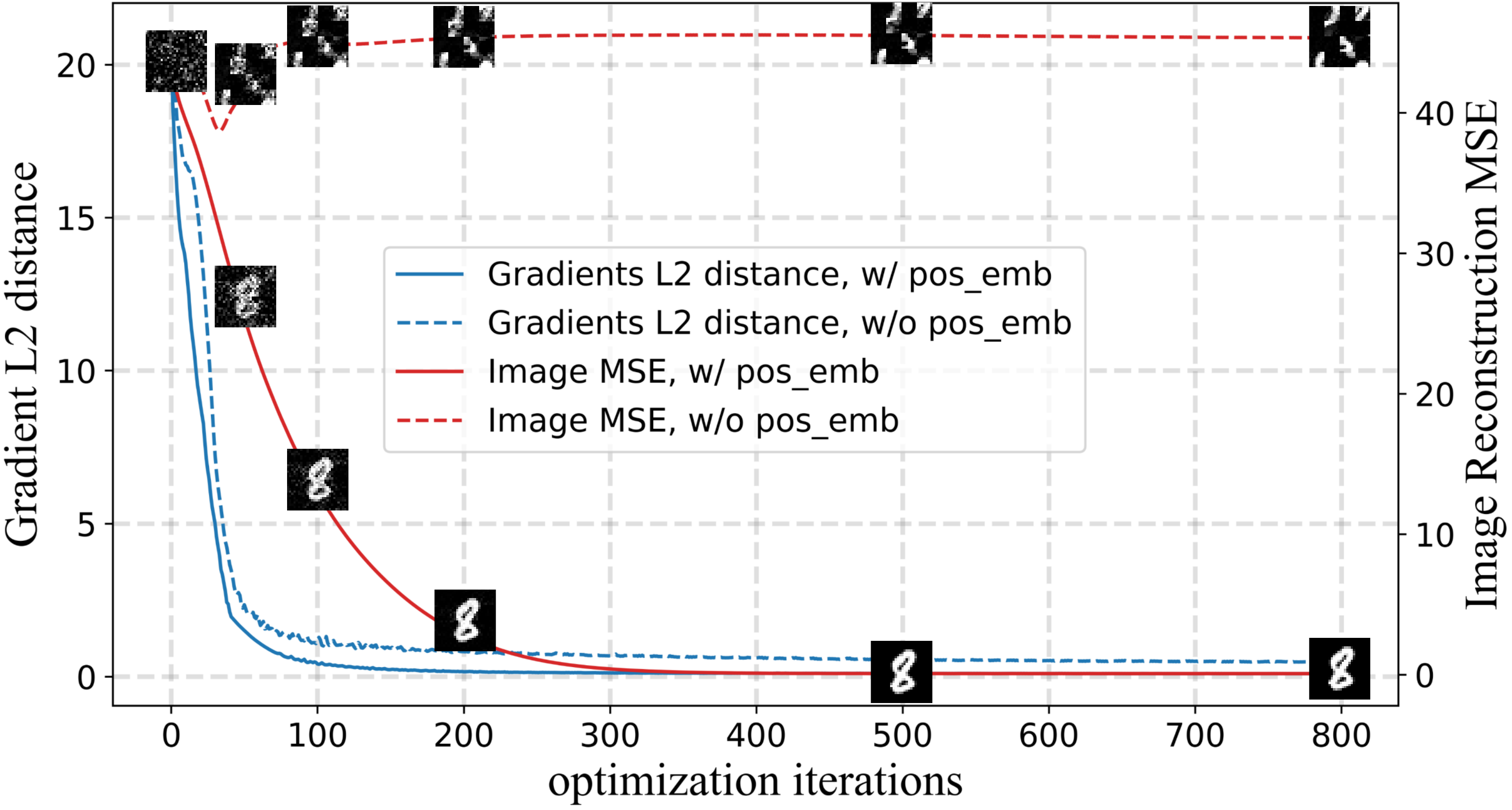}
			\vspace{-7mm}
			\caption*{(B)\ Gradient l2 loss and image MSE on Architecture B}
		\end{minipage}

		\caption{Changes of gradient matching and input reconstruction versus optimization iterations. When position embedding is off, matching gradients does not provide semantically meaningful reconstructions.}
		\vspace{-5mm}
		\label{fig:twinloss}
	\end{figure}
	
	Another more  straightforward way to defend against privacy attacks from gradients is to add noise on gradients. We experiment with Gaussian and Laplacian noises and report results in \cref{fig:noise}. We find that the defense level does not depend on the absolute magnitude of noise variance, but its relative scale to gradient norm. Specifically, when the Gaussian noise variance is lower than 0.1 times (or 0.01 for Laplacian) of gradient norm, the defense won't work. As the variance goes up, the defense ability is greatly promoted. 
	
	\noindent
	\textbf{A Practical and Cheap Defense Scheme.} 
	Apart from adding noise and changing channel dimensions, a more straightforward  way of defending against APRIL is to switch learnable position embedding to a fixed one. In this part, we show that this is a realistic and practical defense, not only for the proposed APRIL, but for all kinds of attacks.
	
	By using a fixed position embedding, clients do not share the gradients \wrt the input. Therefore, it is impossible to perform closed-form APRIL attack. How will optimization-based privacy attacks act when the position embedding is transparent to the attacker?
	
	\begin{wrapfigure}{r}{0.5\linewidth}
		\centering
		\vspace{-1cm}
		\includegraphics[width=0.9\linewidth]{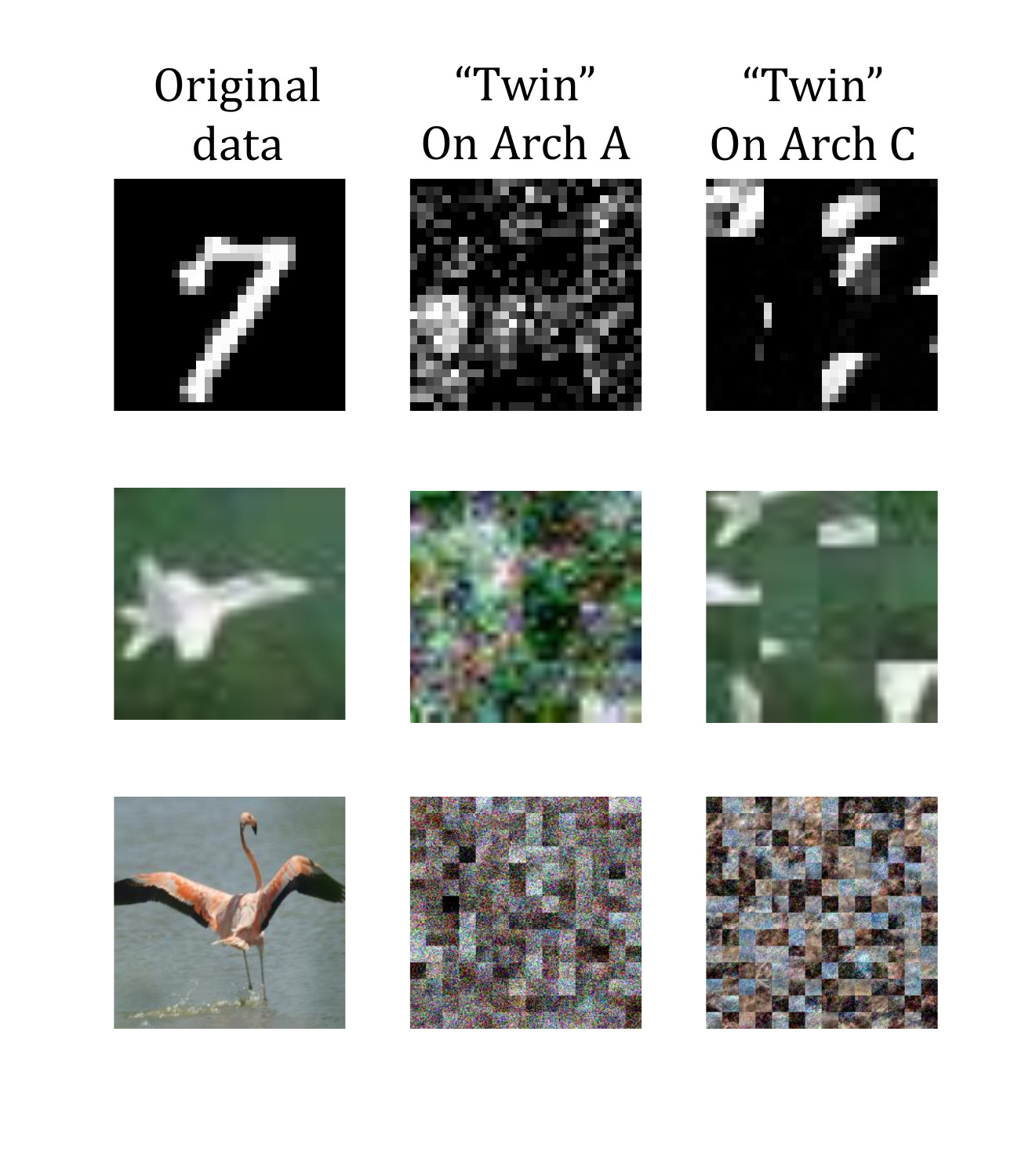}
		\label{fig:dual}
		\vspace{-1cm}
		\caption{Twin data emerge from privacy attack after we stop sharing position embedding. It attested the validity of the defense, in which way confirms that position embedding is indeed the most critical part to Transformer's privacy.}
		\vspace{-1cm}
	\end{wrapfigure}

	We experiment to find out the answer. Note that when position embedding is unknown to the attacker, the optimization-based APRIL attack turns into a more general DLG attack. From results, we notice that similar to \textit{twin data} mentioned by \cite{r-gap}, closing the position embedding gradients seems to result in a family of anamorphic data, which is highly different from original data, but can trigger exactly similar gradients in  a  Transformer. We visualize these patterns as shown in Fig.\red{8}. It's safe to conclude that once we cease sharing position embedding gradients, the gradient matching process will produce semantically meaningless reconstructions. In this way, the attacks fail to break privacy.
	
	To sum up, changing the learnable position to fixed ones or simply not sharing position embedding gradient is practical to prevent privacy leakage in Transformers, which preserves privacy in a highly economic way.
	
	\section{Discussion and Conclusion}
	\label{sec:conclusion}
	In this paper, we introduce a novel approach \textbf{A}ttention \textbf{PRI}vacy \textbf{L}eakage attack (\textbf{APRIL}) to steal private local training data from  shared gradients of a Transformer. The attack builds its success on a key finding that learnable position embedding is the weak spot for Transformer's privacy. Our experiments show that in certain cases the adversary can apply a closed-form attack to directly obtain the input. For broader scenarios, the attacker can make good use of position embedding to perform an optimization-based attack to easily reveal the input. This sets a great challenge to training Transformer models in distributed learning systems. We further discussed possible defenses towards APRIL attack, and verified the effectiveness of using a fixed position embedding. We hope this work would shed light on privacy-preserving network architecture design. In summary, our work has a key finding that learnable position embedding is a weak spot to leak privacy, which greatly advances the understanding of privacy leakage problem for Transformers. Based on the finding, we further propose a novel privacy attack APRIL and discuss effective defending schemes.

	\noindent
	\textbf{Limitation.}
	Our proposed APRIL attack is composed of two parts: closed-form attack when the input gradients are exposed and optimization-based attack otherwise. Closed-form APRIL attack is powerful, nonetheless relies on a strong assumption, which makes it limited to use in real-world Transformer designs. On the other hand, optimization-based APRIL attack implicitly solves a non-linear system. Although they all make good use of gradients from position embedding, there seems to be room to explore a more profound relationship between the two attacks.
	
	\noindent
	\textbf{Potential Negative Societal Impact.} 
	We demonstrate the privacy risk of learnable position embedding, as it is largely used as a paradigm in training Transformers. The privacy attack \textbf{APRIL} proposed in this paper could be utilized by the malicious to perform attack towards existing federated learning systems to steal user data.  We put stress on the defense strategy proposed in the paper as well, and urge the importance of designing privacy-safer Transformer blocks.

	\bibliographystyle{plainnat}
	\bibliography{egbib}
	
	\newpage
	\appendix
	
	\maketitle
	\section{Do Twin Data Come From Lacking of Parameters Involved in Gradient Matching?}
	In \cref{subsec:nopos} and Fig.\red{6} we demonstrate and show that \textit{twin data} can emerge from gradient attacks after disabling learnable position embedding. Formally, a \textit{twin data} appears when a privacy attacker performs gradient matching in a Vision Transformer without matching gradients from position embedding. Some may \textbf{doubt that twin data may come from lack of parameters involved in gradient matching}. We give explicit clues that this is not the case.
	
	We demonstrate this by using examples on MNIST. We use Architecture B (as illustrated in \cref{fig:arch}(B)) with encoder depth 4, hidden dimension of 384. So the overall parameters involved in gradient matching w/ or w/o position embedding are set out in the following table:
	\begin{table}[h]
		\centering
		\label{tab:para}
		\scalebox{1.0}{
			\begin{tabular}{|c|c|}
				\hline
				w/ pos\_emb & w/o pos\_emb \\
				\hline
				7,123,210 & 7,116,682 \\
				\hline
			\end{tabular}
		}
		\caption{Parameters involved in gradient matching, using Architecture (B) for MNIST, encoder depth = 4.}
	\end{table}

	\begin{wrapfigure}{R}{0.5\textwidth}
		\centering
		\begin{minipage}[t]{0.15\linewidth}
			\centering
			\includegraphics[width=\linewidth]{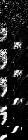}
			\vspace{-5mm}
			\caption*{\small without pos\_emb}
		\end{minipage}	
		\hfill
		\begin{minipage}[t]{0.15\linewidth}
			\centering
			\includegraphics[width=\linewidth]{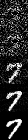}
			\vspace{-5mm}
			\caption*{\small without encoder1}
		\end{minipage}	
		\hfill
		\begin{minipage}[t]{0.15\linewidth}
			\centering
			\includegraphics[width=\linewidth]{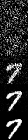}
			\vspace{-5mm}
			\caption*{\small without encoder2}
		\end{minipage}	
		\hfill
		\begin{minipage}[t]{0.15\linewidth}
			\centering
			\includegraphics[width=\linewidth]{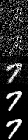}
			\vspace{-5mm}
			\caption*{\small without encoder3}
		\end{minipage}	
		\hfill	
		\begin{minipage}[t]{0.15\linewidth}
			\centering
			\includegraphics[width=\linewidth]{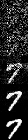}
			\vspace{-5mm}
			\caption*{\small without encoder4}
		\end{minipage}	
		\hfill
		\caption{Results with different sets of parameters involved.}
		\setlength{\belowcaptionskip}{-4cm}
		\vspace{-3cm}
		\label{fig:supp_twinparam}
	\end{wrapfigure}
	
	Here we make a more radical assumption: if we still use position embedding and \textit{twin data} do not occur, when involving much less parameters than in the case without position embedding, then we can empirically prove that "twin data" do not come from lacking of model parameters.
	
	\vspace{5mm}
	For comparisons, we respectively disable the matching of gradients from a specific encoder layer, denoted as \textit{encoder 1}, \textit{encoder 2}, \textit{encoder 3} and \textit{encoder 4}. In these four settings, the overall parameters involved in gradient matching are 6,533,002, much fewer than the case when we disable position embedding, which has 7,123,210 parameters involved.
	
	From \cref{fig:supp_twinparam} we observe that: \textit{twin data} do not come from lacking of parameters involved in optimization.
	
	\newpage
	\section{More Results}
	\subsection{Optimization-based Attack Towards Single Image}
	\noindent
	Here we show more ImageNet attack results in \cref{fig:supp_insingle} Under most cases, the attacks are successful and expose enough information to break privacy. Depending on the content of original images, attacks can have different levels of failures sometimes, when the optimization fall into a bad local minimum and the result has block artifacts. The hardness of attack does not have a preference for certain classes; it depends on the content of original sample. We can observe that, images with higher contrast are easier to have stronger block artifacts on their reconstructions.
	
	We display 6 successful results and 2 failure cases in \cref{fig:supp_insingle}. Each row represents the reconstruction process of a single image; ground-truth input image is shown on the left of each row.

	\subsection{Optimization-based Attack Towards Batched Images}
	Optimization-based APRIL attack obtains great results on batched images. We found that inverting a batch of several images is not markedly harder than inverting single image; with the same setting of iterations and learning rates, even if the batch size is increased from 4 to 24, the reconstructions are still recognizable. We show the results in Fig.\red{12} and Fig.\red{14}. 
	
	\begin{wrapfigure}{R}{0.5\linewidth}
		\centering
		\begin{minipage}{\linewidth}
			\includegraphics[width=0.76\linewidth]{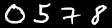}
			\includegraphics[width=0.19\linewidth]{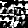}
			\includegraphics[width=0.76\linewidth]{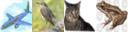}
			\includegraphics[width=0.19\linewidth]{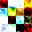}
			\includegraphics[width=0.76\linewidth]{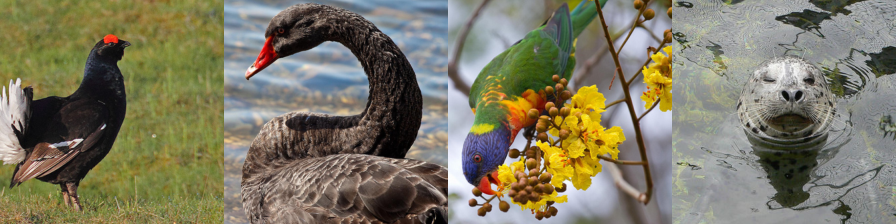}
			\hfill
			\includegraphics[width=0.19\linewidth]{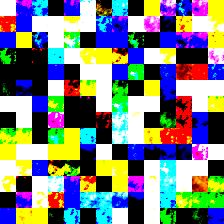}
		\end{minipage}
		
		\caption{Closed-form APRIL attack results on batched inputs, which show perplexing patterns of tanglement.}
		\label{fig:supp_clsbatch}
	\end{wrapfigure}

	\subsection{Closed-Form APRIL Attack Towards Batched Images}
	
	It is worth to notice that the closed-form APRIL attack uses the averaged mean of gradients of all samples in a batch to solve the closed-form solution. Due to the dimensionality, we can only obtain a single image proxy for the whole batch. The results are shown in \cref{fig:supp_clsbatch}, which give almost no useful information towards original input images.

	As another closed-form attack, R-GAP~\cite{r-gap} provides results of their approach on batched data as well in their Figure 7. Their results are easy to understand since they look like additive combinations of the original batch of images.
	
	In contrast, our results are not recognizable and easy to be interpreted since we do not obtain the solution in an additive way, but in a multiplicative way.

	\begin{figure*}
		\centering
		\begin{minipage}[h]{\linewidth}
			\includegraphics[width=0.125\linewidth]{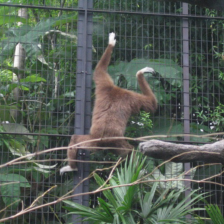}
			\includegraphics[width=0.875\linewidth]{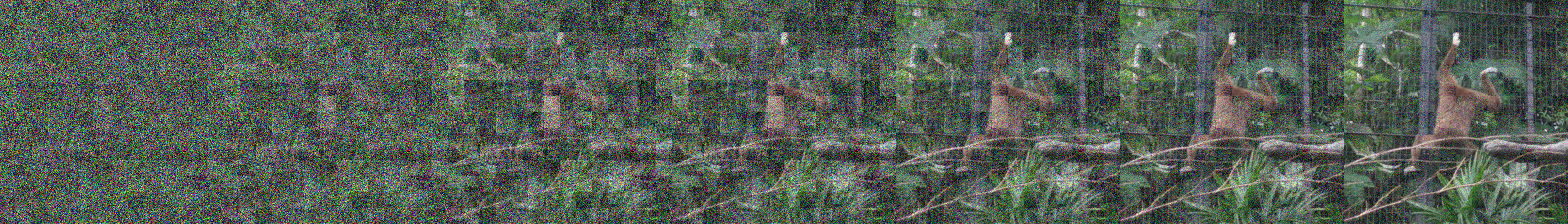}
			\includegraphics[width=0.125\linewidth]{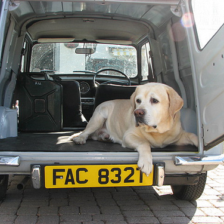}
			\includegraphics[width=0.875\linewidth]{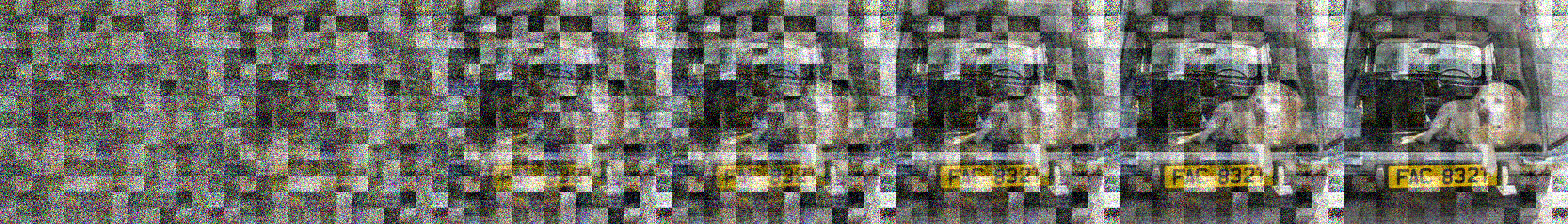}
			\includegraphics[width=0.125\linewidth]{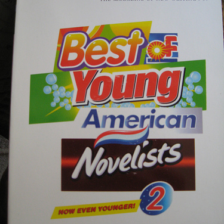}
			\includegraphics[width=0.875\linewidth]{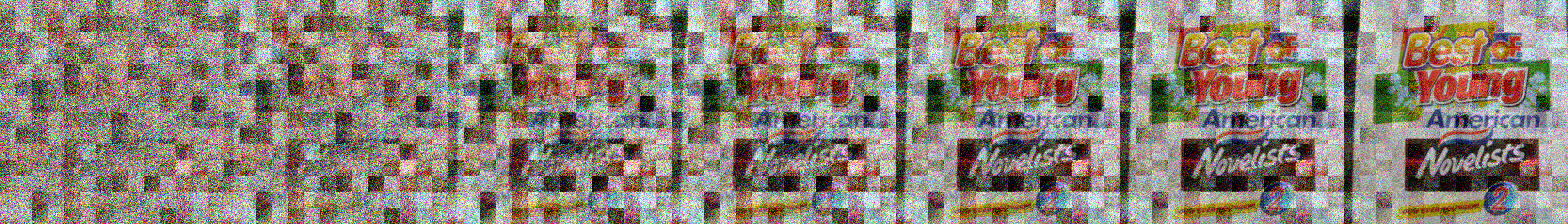}
			\includegraphics[width=0.125\linewidth]{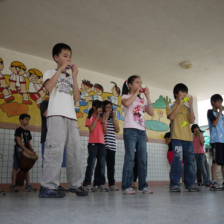}
			\includegraphics[width=0.875\linewidth]{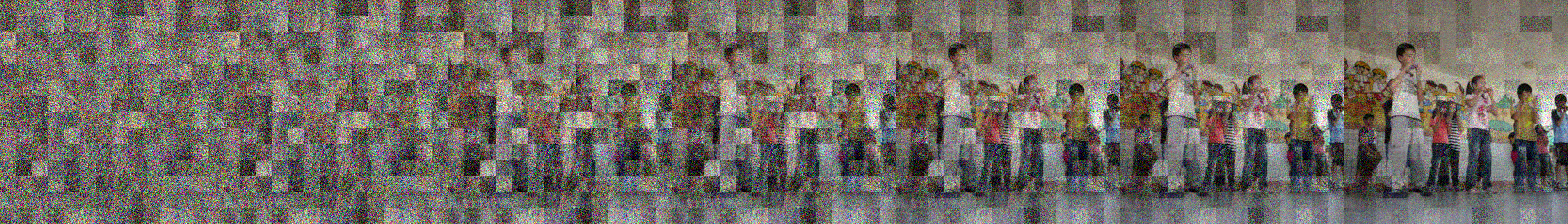}
			\includegraphics[width=0.125\linewidth]{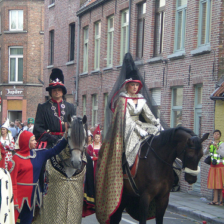}
			\includegraphics[width=0.875\linewidth]{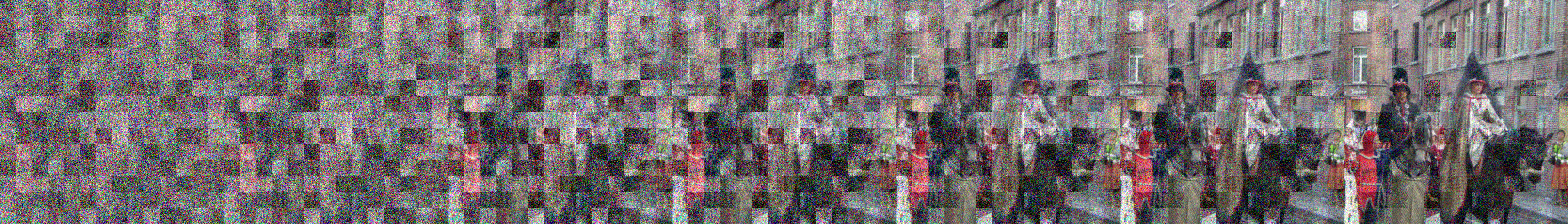}
			\includegraphics[width=0.125\linewidth]{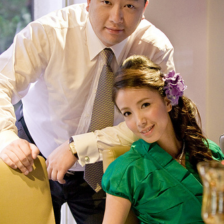}
			\includegraphics[width=0.875\linewidth]{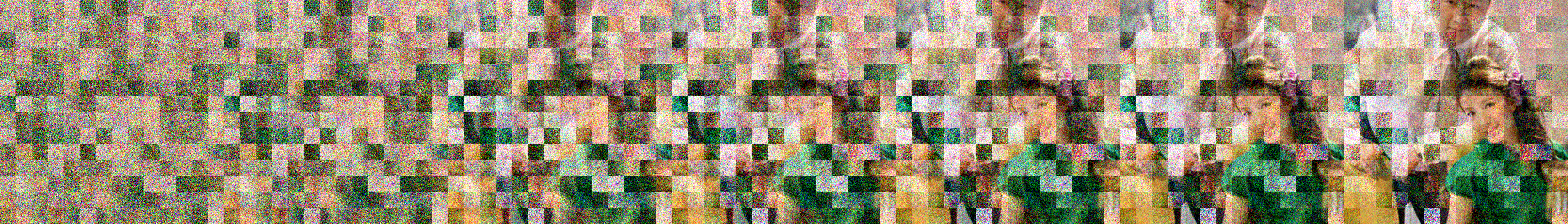}
			\caption*{Successful attack results.}
		\end{minipage}
		
		\begin{minipage}{\linewidth}
			\includegraphics[width=0.125\linewidth]{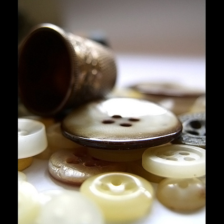}
			\includegraphics[width=0.875\linewidth]{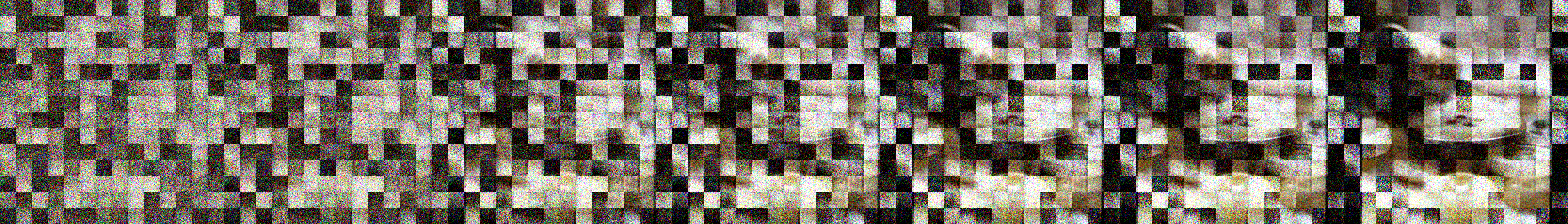}
			\includegraphics[width=0.125\linewidth]{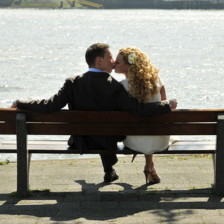}
			\includegraphics[width=0.875\linewidth]{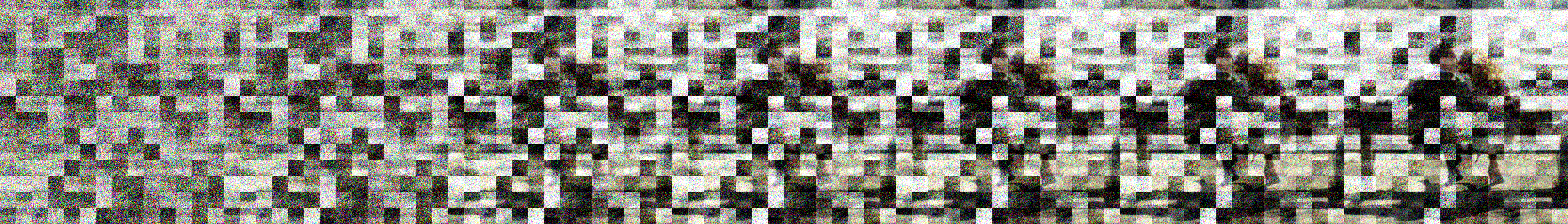}
			\caption*{Failure attack results.}
		\end{minipage}
		
		\caption{More results on ImageNet for single image attacks. From left to right: the ground-truth image; reconstrutions at iteration 50, 100, 500, 1000, 2000, 3000, 5000.}
		\label{fig:supp_insingle}
	\end{figure*}

	\begin{figure*}[t]
		\centering
		\begin{minipage}{\linewidth}
			\centering
			\includegraphics[width=0.5\linewidth]{imagenet_batch4.png}
			\includegraphics[width=0.5\linewidth]{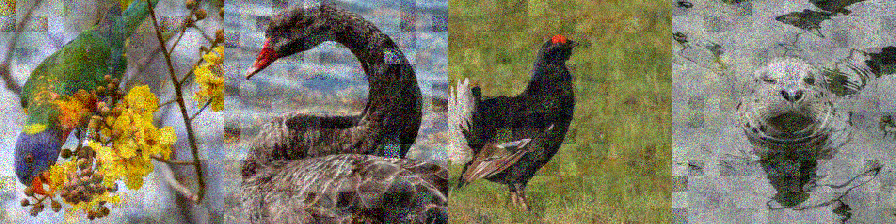}
			\label{supp_4samples}
			\caption{Reconstruction of a batch of 4 ImageNet samples. The first row: ground-truth batch images. The second row: reconstruction results by optimization-based APRIL.}
		\end{minipage}
		\begin{minipage}{\linewidth}
			\centering
			\includegraphics[width=\linewidth]{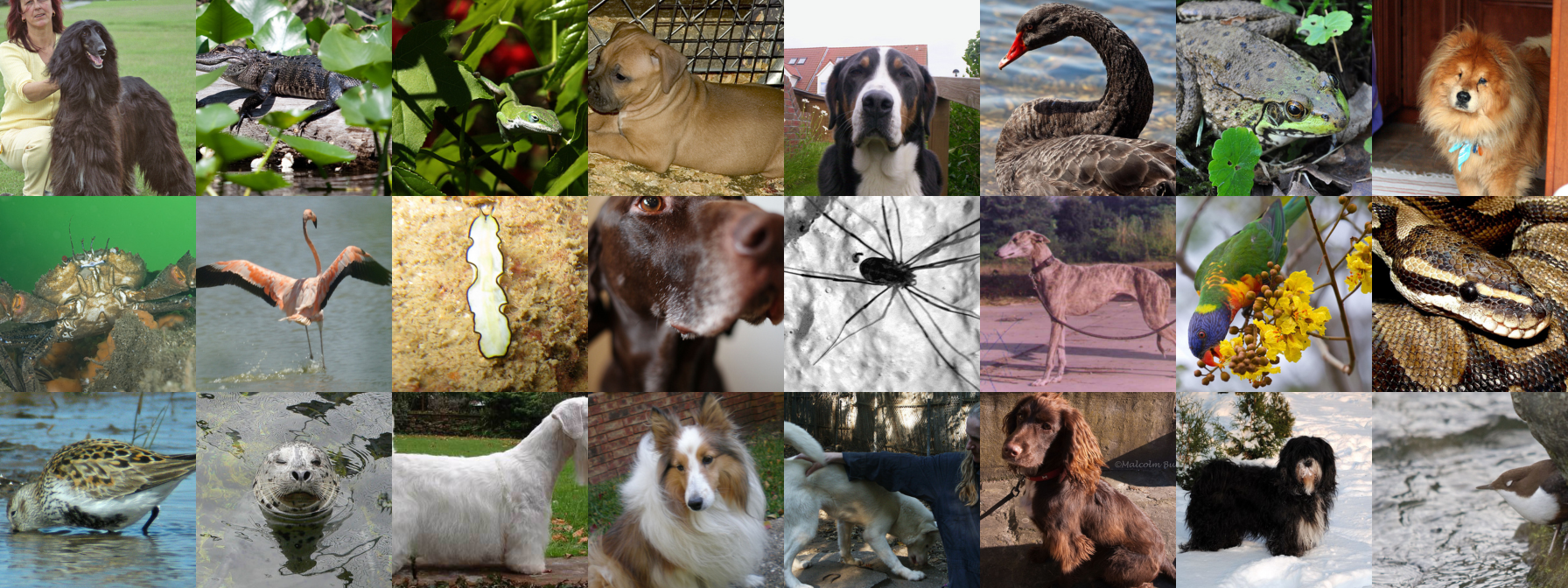}
			\label{supp_24gt}
			\caption{A batch of 24 ImageNet samples.}
			\vfill
			\vspace{7mm}
			\includegraphics[width=\linewidth]{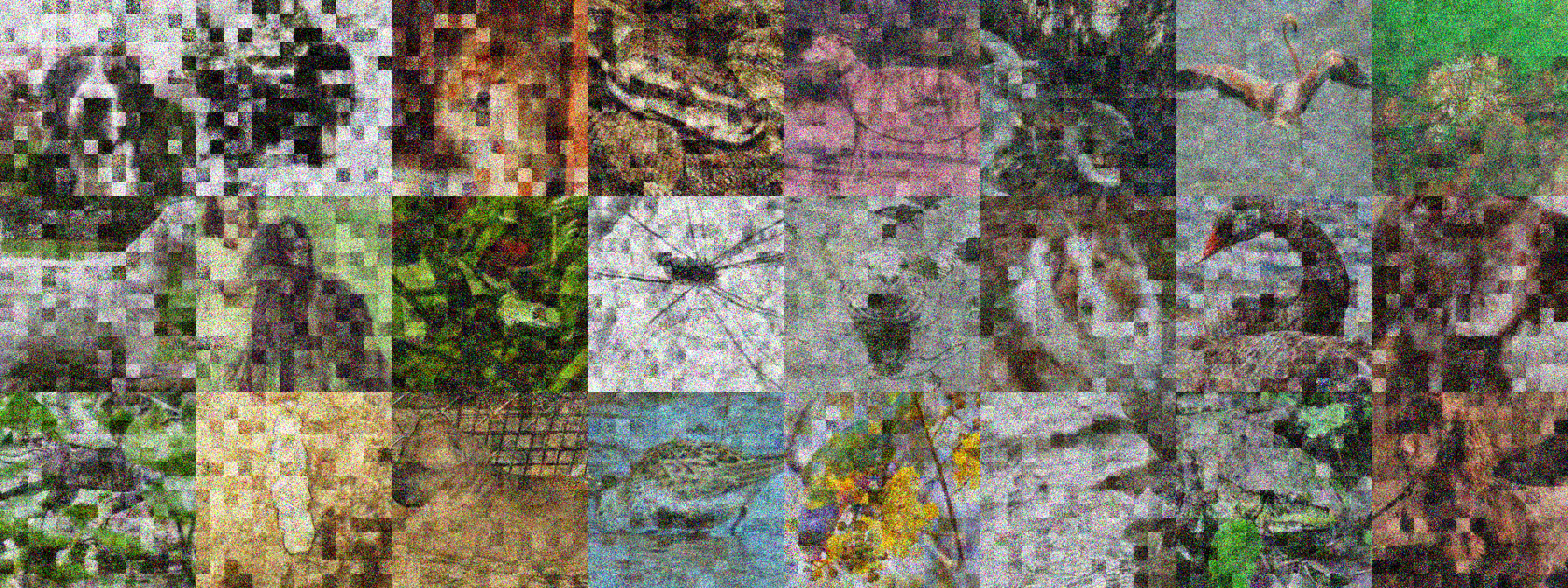}
			\label{supp_24recon}
			\caption{Reconstruction of the image batch in Fig. \red{12} using optimization-based APRIL.}
		\end{minipage}
		\caption*{}
		\label{fig:supp_optbatch}
	\end{figure*}
	
\end{document}